\theoremstyle{plain}
\newtheorem{theorem}{Theorem}[section]
\newtheorem{lemma}[theorem]{Lemma}
\newtheorem{corollary}[theorem]{Corollary}
\theoremstyle{definition}
\newtheorem{assumption}[theorem]{Assumption}
\theoremstyle{remark}
\newtheorem{remark}[theorem]{Remark}
\DeclareMathOperator*{\argmax}{arg\,max}
\DeclareMathOperator*{\argmin}{arg\,min}
\newcommand{\bA}{{\boldsymbol A}}
\newcommand{\bB}{{\boldsymbol B}}
\newcommand{\bC}{{\boldsymbol C}}
\newcommand{\bV}{{\boldsymbol V}}
\newcommand{\bZeros}{{\boldsymbol 0}}
\newcommand{\bj}{{\boldsymbol j}}
\newcommand{\bu}{{\boldsymbol u}}
\newcommand{\bv}{{\boldsymbol v}}
\newcommand{\bw}{{\boldsymbol w}}
\newcommand{\bx}{{\boldsymbol x}}
\newcommand{\by}{{\boldsymbol y}}
\newcommand{\bz}{{\boldsymbol z}}
\newcommand{\BN}{\mathbb{N}}
\newcommand{\BR}{\mathbb{R}}
\newcommand{\BP}{\mathbb{P}}
\newcommand{\bbeta}{{\boldsymbol \beta}}
\newcommand{\bgamma}{{\boldsymbol \gamma}}
\newcommand{\bmu}{{\boldsymbol \mu}}
\def\boxit#1{\vbox{\hrule\hbox{\vrule\kern6pt
          \vbox{\kern6pt#1\kern6pt}\kern6pt\vrule}\hrule}}
\title{Sparse Deep Learning for Time Series Data: Theory and Applications}
\author{%
  Mingxuan Zhang\\
  Department of Statistics\\
  Purdue University\\
  \texttt{zhan3692@purdue.edu}\\
  \AND
  Yan Sun\\
  Department of Biostatistics, Epidemiology, and Informatics\\
  University of Pennsylvania \\
  \texttt{yan.sun@pennmedicine.upenn.edu}
  \And
  Faming Liang\\
  Department of Statistics\\
  \texttt{fmliang@purdue.edu} \\
}
\begin{document}
\maketitle

\begin{abstract}

Sparse deep learning has become a popular technique for improving the performance of deep neural networks in areas such as uncertainty quantification, variable selection, and large-scale network compression. However, most existing research has focused on problems where the observations are independent and identically distributed (i.i.d.), and there has been little work on the problems where the observations are dependent, such as time series data and sequential data in natural language processing. This paper aims to address this gap by studying the theory for sparse deep learning with dependent data. We show that sparse recurrent neural networks (RNNs) can be consistently estimated, and their predictions are asymptotically normally distributed under appropriate assumptions, enabling the prediction uncertainty to be correctly quantified. Our numerical results show that sparse deep learning outperforms state-of-the-art methods, such as conformal predictions, in prediction uncertainty quantification for time series data. Furthermore, our results indicate that the proposed method can consistently identify the autoregressive order for time series data and outperform existing methods in large-scale model compression. Our proposed method has important practical implications in fields such as finance, healthcare, and energy, where both accurate point estimates and prediction uncertainty quantification are of concern.

\end{abstract}

\section{Introduction}
\label{main: intro}

Over the past decade, deep learning has experienced unparalleled triumphs across a multitude of domains, such as time series forecasting \cite{lai2018modeling, salinas2020deepar, rangapuram2018deep, challu2022n, hewamalage2021recurrent}, natural language processing \cite{melis2019mogrifier, floridi2020gpt}, and computer vision \cite{tatsunami2022sequencer, liu2021swin}. However, challenges like 
generalization and miscalibration \cite{guo2017calibration} persist, posing potential risks in critical applications like medical diagnosis and autonomous vehicles.

In order to enhance the performance of deep neural networks (DNNs), significant research efforts have been dedicated to exploring optimization methods and the loss surface of the DNNs, 
see, e.g.,   \cite{nguyen2017loss, gori1992problem, allen2019convergence, du2019gradient, zou2020gradient, zou2019improved}, which have aimed to expedite and direct the convergence of DNNs towards regions that exhibit strong generalization capabilities.
  While these investigations are valuable, effectively addressing both the challenges of 
  generalization and miscalibration require additional and perhaps more essential aspects: consistent estimation of the underlying input-output mapping and complete knowledge of the asymptotic distribution of predictions. 
 As a highly effective method that addresses both challenges, sparse deep learning has been extensively studied, see, e.g., \cite{sun2021consistent, sun2021sparse, liang2018bayesian, polson2018posterior, wang2020uncertainty}. Nevertheless, it is important to note that all the studies have been conducted under the assumption of independently and identically distributed (i.i.d.) data. However, in practice, we frequently encounter situations where the data exhibits dependence, such as time series data.

The primary objective of this paper is to address this gap by establishing a theoretical foundation for sparse deep learning with time series data. Specifically, we lay the foundation within the Bayesian framework. For RNNs, by letting their parameters be subject to a mixture Gaussian prior, we establish posterior consistency, structure selection consistency, input-output mapping estimation consistency, and asymptotic normality of predicted values. 

We validate our theory through numerical experiments on both synthetic and real-world datasets.  
Our approach outperforms existing state-of-the-art methods in uncertainty quantification and model compression, highlighting its potential for practical applications  
where both accurate point prediction and prediction uncertainty quantification are of concern. 

\section{Related Works}

\textbf{Sparse deep learning}. Theoretical investigations have been conducted on the approximation power of sparse DNNs across different classes of functions \cite{schmidt2020nonparametric, bolcskei2019optimal}. Recently, \cite{sun2021consistent} has made notable progress by integrating sparse DNNs into the framework of statistical modeling, which offers a fundamentally distinct neural network approximation theory. Unlike traditional theories that lack data involvement and allow connection weights to assume values in an unbounded space to achieve arbitrarily small approximation errors with small networks \cite{maiorov1999lower}, their theory \cite{sun2021consistent} links network approximation error, network size, and weight bounds to the training sample size. They show that a sparse DNN of size $O(n/\log(n))$ can effectively approximate various types of functions, such as affine and piecewise smooth functions, as $n \to \infty$, where $n$ denotes the training sample size. Additionally, sparse DNNs exhibit several advantageous theoretical guarantees, such as improved interpretability, enabling the consistent identification of relevant variables for high-dimensional nonlinear systems. Building upon this foundation, \cite{sun2021sparse} establishes the asymptotic normality of connection weights and predictions, enabling valid statistical inference for predicting uncertainties. 
This work extends the sparse deep learning theory of 
\cite{sun2021consistent,sun2021sparse}
from the case of i.i.d data to the case of time series data. 

\textbf{Uncertainty quantification}. Conformal Prediction (CP) has emerged as a prominent technique for generating prediction intervals, particularly for black-box models like neural networks. A key advantage of CP is its capability to provide valid prediction intervals for any data distribution, even with finite samples, provided the data meets the condition of exchangeability \cite{lei2018distribution, papadopoulos2002inductive}. While i.i.d. data easily satisfies this condition, dependent data, such as time series, often doesn't. Researchers have extended CP to handle time series data by relying on properties like strong mixing and ergodicity \cite{chernozhukov2018exact, xu2021conformal}. In a recent work, \cite{barber2023conformal} introduced a random swapping mechanism to address potentially non-exchangeable data, allowing conformal prediction to be applied on top of a model trained with weighted samples. The main focus of this approach was to provide a theoretical basis for the differences observed in the coverage rate of the proposed method. Another recent study by \cite{zaffran2022adaptive} took a deep dive into the Adaptive Conformal Inference (ACI) \cite{gibbs2021adaptive}, leading to the development of the Aggregation Adaptive Conformal Inference (AgACI) method. In situations where a dataset contains a group of similar and independent time series, treating each time series as a separate observation, applying a CP method becomes straightforward \cite{stankeviciute2021conformal}. For a comprehensive tutorial on CP methods, one can refer to \cite{angelopoulos2021gentle}. Beyond CP, other approaches for addressing uncertainty quantification in time series datasets include multi-horizon probabilistic forecasting \cite{wen2017multi}, methods based on dropout \cite{gal2016dropout}, and recursive Bayesian approaches \cite{mirikitani2009recursive}.



\section{Sparse Deep Learning for Time Series Data: Theory}
\label{main: theory}

Let $D_n = \{y_{1}, \dots, y_{n}\}$ denote  
a time series sequence, where $y_i \in \BR$. Let $(\Omega, \mathcal{F}, P^{*})$ be  the probability space of $D_n$, 
and  let $\alpha_k = \sup\{|P^{*}(y_j \in A, y_{k+j} \in B) - P^{*}(y_j \in A)P^{*}(y_{k+j} \in B)|: A, B \in \mathcal{F}, j \in \BN^{+}\}$ be the $k$-th order $\alpha$-mixing coefficient.

\begin{assumption} The time series 
$D_n$ is (strictly) stationary and $\alpha$-mixing with an exponentially decaying mixing coefficient and follows an autoregressive model of order $l$  
\begin{equation} \label{timemodel}
y_i = \mu^{*}(y_{i-1:i-l},\bu_{i}) + \eta_i,
\end{equation}
where $\mu^{*}$ is a non-linear function, $y_{i-1:i-l} = (y_{i-1},\dots,y_{i-l})$, $\bu_i$ contains optional exogenous variables, and $\eta_i \stackrel{i.i.d.}{\sim} N(0, \sigma^2)$ with $\sigma^2$ being assumed to be a constant.
\label{assump1}
\end{assumption}

\begin{remark}
\label{remark: assump1}
Similar assumptions are commonly adopted to establish asymptotic properties of stochastic processes \cite{ling2010general, yu1994rates, meir1997performance, shalizi2013predictive, ghosal2007convergence, xu2021conformal}. For example, the asymptotic normality of the maximum likelihood estimator (MLE) can be established under the assumption that the time series is strictly stationary and ergodic, provided that the model size is fixed \cite{ling2010general}. A posterior contraction rate of the autoregressive (AR) model can be obtained by assuming it is $\alpha$-mixing with $\sum_{k=0}^{\infty} a_k^{1-2/s} < \infty$ for some $s>2$ which is implied by an exponentially decaying mixing coefficient \cite{ghosal2007convergence}. For stochastic processes that are strictly stationary and $\beta$-mixing, results such as uniform laws of large numbers and convergence rates of the empirical processes \cite{yu1994rates, meir1997performance} can also be obtained.
\end{remark}


\begin{remark}
\label{remark: set of ts}
Extending the results of \cite{sun2021consistent, sun2021sparse} to the case that the dataset includes a set of i.i.d. time series, with each time series regarded as an individual observation, is straightforward, this is because all observations are independent and have the same distribution.
\end{remark}


\subsection{Posterior Consistency}
Both the MLP and RNN can be used to approximate $\mu^{*}$ as defined in (\ref{timemodel}), and for simplicity, we do not explicitly denote the exogenous variables $\bu_i$ unless it is necessary. For the MLP, we can formulate it as a regression problem, where the input is $\bx_i = y_{i-1:i-R_l}$ for some $l \leq R_l \ll n$, and the corresponding output is $y_i$, then the dataset $D_n$ can be expressed as $\{(\bx_i, y_i)\}^{n}_{i=1+R_l}$. Detailed settings and results for the MLP are given in Appendix \ref{appendix: MLPs}. In what follows, we will focus on the RNN, which serves as an extension of the previous studies. For the RNN, we can rewrite the training dataset as $D_n = \{y_{i:i-M_l+1}\}_{i=M_l}^{n}$ for some $l \leq R_l < M_l \ll n$, i.e., we split the entire sequence into a set of shorter sequences, where $R_l$ denotes an upper bound for the exact AR order $l$, and $M_l$ denotes the length of these shorter sequences (see Figure \ref{fig: rnn}). We assume $R_l$ is known but not $l$ since, in practice, it is unlikely that we know the exact order $l$. 


\begin{figure}[t]
  \centering
  \includegraphics[scale=0.5]{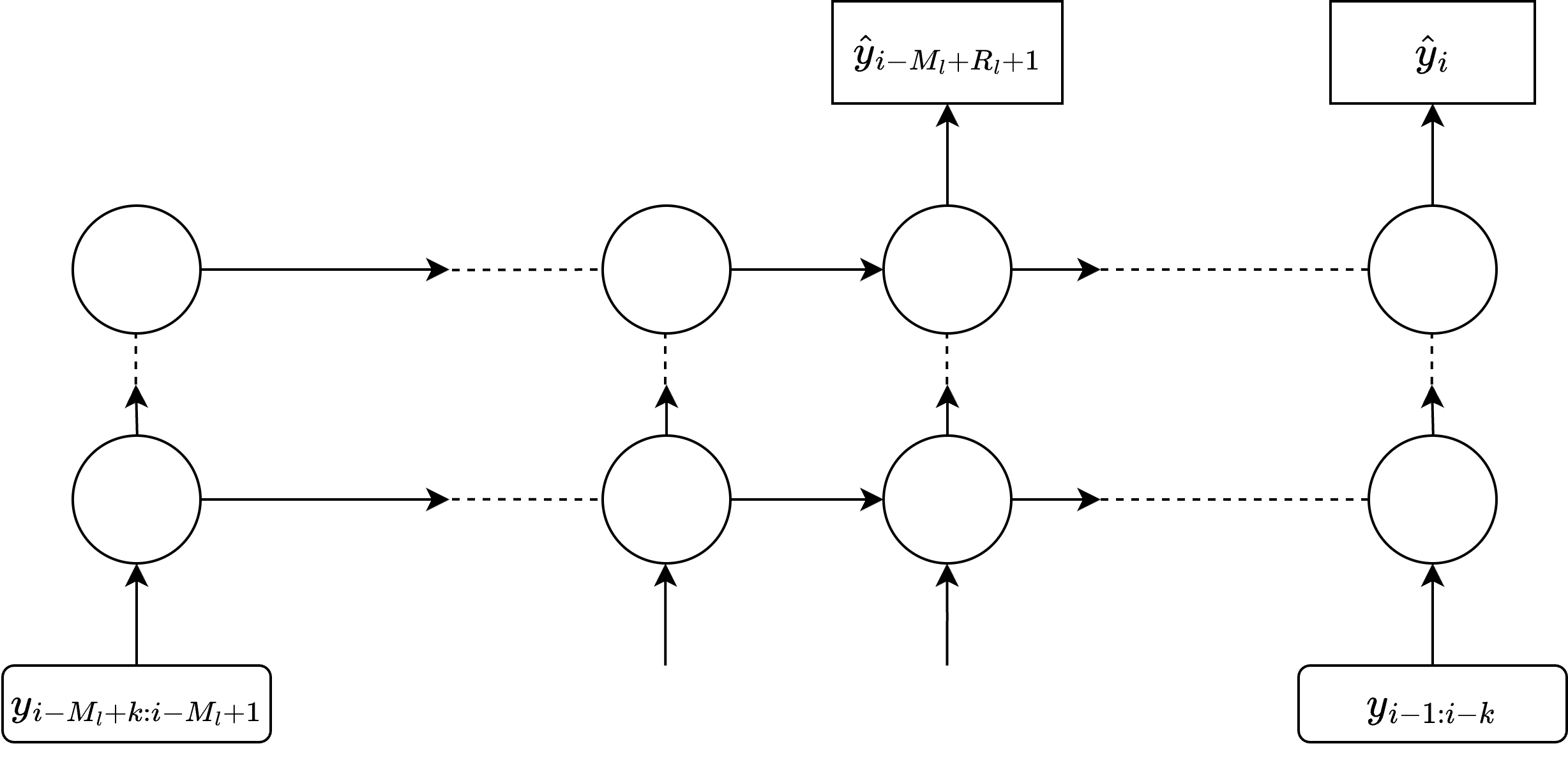}
  \caption{A multi-layer RNN with an input window size of $k$. We restrict the use of the RNN's outputs until the hidden states have accumulated a sufficient quantity of past information to ensure accurate predictions.}
  \label{fig: rnn}
\end{figure}




For simplicity of notations, we do not distinguish between weights and biases of the RNN. In this paper, the presence of the subscript $n$ in the notation of a variable indicates its potential to increase with the sample size $n$. To define an RNN with $H_n-1$ hidden layers, for $h \in \{1,2,\dots,H_n\}$, we let $\psi^{h}$ and $L_h$ denote, respectively, the nonlinear activation function and the number of hidden neurons at layer $h$. We set $L_{H_n} = 1$ and $L_0 = p_n$, where $p_n$ denotes a generic input dimension. Because of the existence of hidden states from the past, the input $\bx_i$ can contain only $y_{i-1}$ or $y_{i-1:i-r}$ for some $r > 1$. Let $\bw^{h} \in \BR^{L_{h}\times L_{h-1}}$ and $\bv^{h} \in \BR^{L_h\times L_h}$ denote the weight matrices at layer $h$. With these notations, the output of the step $i$ of an RNN model can be expressed as
\begin{equation}
    \begin{aligned}
    \label{drnn}
    &\mu(\bx_i, \{\bz_{i-1}^h\}^{H_n-1}_{h=1}, \bbeta) = \bw^{H_n}\psi^{H_n-1}[\cdots\psi^{1}[\bw^{1}\bx_i + \bv^{1}\bz_{i-1}^{1}]\cdots],
    \end{aligned}
\end{equation}
where $\bz_{i}^{h} = \psi^{h}[\bw^{h}\bz_{i}^{h-1} + \bv^{h}\bz_{i-1}^{h}]$ denotes the hidden state of layer $h$ at step $i$ with $\bz^{h}_{0} = \bZeros$; and $\bbeta$ is the collection of all weights, consisting of $K_n = \sum_{h=1}^{H_n} (L_h\times L_{h-1}) + \sum_{h=1}^{H_n-1} (L_h^2) + L_h$ elements. To represent the structure for a sparse RNN, we introduce an indicator variable for each weight in $\bbeta$. Let $\bgamma = \{\gamma_{j} \in \{0,1\} : j = 1, \dots, K_n\}$, which specifies the structure of a sparse RNN. To include information on the network structure $\bgamma$ and keep the notation concise, we redenote   $\mu(\bx_{i},  \{\bz_{i-1}^h\}^{H_n-1}_{h=1}, \bbeta)$ by $\mu(\bx_{i:i-M_l+1}, \bbeta, \bgamma)$, as $\{\bz_{i-1}^h\}^{H_n-1}_{h=1}$ depends only on $(\bbeta,\bgamma)$ and up to $\bx_{i-1:i-M_l+1}$.

Posterior consistency is an essential concept in Bayesian statistics, which forms the basis 
of Bayesian inference.
While posterior consistency generally holds for low-dimensional problems, establishing it becomes challenging in high-dimensional scenarios. In such cases, the dimensionality often surpasses the sample size, and if the prior is not appropriately elicited, prior information can overpower the data information, leading to posterior inconsistency.  

Following \cite{sun2021consistent, sun2021sparse, liang2018bayesian}, we let each connection weight be subject to a mixture Gaussian prior, i.e., 
\begin{equation} \label{mg}
    \begin{aligned}
    &\beta_{j} \sim \lambda_n N(0, \sigma^{2}_{1,n}) + (1-\lambda_n)N(0, \sigma^{2}_{0,n}), \quad j\in\{1,2,\ldots,K_n\},
    \end{aligned}
\end{equation}
by integrating out the structure information $\bgamma$, where $\lambda_n \in (0, 1)$ is the mixture proportion, $\sigma^{2}_{0, n}$ is typically set to a very small number, while $\sigma^{2}_{1, n}$ is relatively large. Visualizations of the mixture Gaussian priors for different $\lambda_n$, $\sigma^{2}_{0, n}$, and $\sigma^{2}_{1, n}$ are given in the Appendix \ref{appendix: prior_annealing_practice}.

We assume $\mu^{*}$ can be well approximated by a sparse RNN given enough past information, and refer to this sparse RNN as the true RNN model. To be more specific, we define the true RNN model as
\begin{equation} \label{true_drnn}
    (\bbeta^{*}, \bgamma^{*}) = \argmin_{(\bbeta, \bgamma)\in \mathcal{G}_n, \norm{\mu(\bx_{i:i-M_l+1}, \bbeta, \bgamma) - \mu^{*}(y_{i-1:i-l})}_{L_{2}(\Omega)} \leq \varpi_n} |\bgamma|,
\end{equation}
where $\mathcal{G}_n \vcentcolon= \mathcal{G}(C_0, C_1, \epsilon, p_n, L_1, L_2, \dots, L_{H_n})$ denotes the space of all valid networks that satisfy the Assumption \ref{assump2} for the given values of $H_n$, $p_n$, and $L_h$'s, and $\varpi_n$ is some sequence converging to $0$ as $n \to \infty$. For any given RNN $(\bbeta, \bgamma)$, the error $|\mu^{*}(\cdot) - \mu(\cdot, \bbeta, \bgamma)|$ can be decomposed as the approximation error $|\mu^{*}(\cdot) - \mu(\cdot, \bbeta^{*}, \bgamma^{*})|$ and the estimation error $|\mu(\cdot, \bbeta^{*}, \bgamma^{*}) - \mu(\cdot, \bbeta, \bgamma)|$. The former is bounded by $\varpi_n$, and the order of the latter will be given in Theorem \ref{thm:pos_con_main}. For the sparse RNN, we make the following assumptions:

\begin{assumption}
\label{assump2}
The true sparse RNN model $(\bbeta^{*}, \bgamma^{*})$ satisfies the following conditions:
\begin{itemize}
    \item The network structure satisfies:  $r_nH_n\log{n} + r_n\log{\bar{L}} + s_n\log{p_n} \leq C_0n^{1-\epsilon}$, where $0 < \epsilon < 1$ is a small constant, $r_n = |\bgamma^{*}|$ denotes the connectivity of $\bgamma^{*}$, $\bar{L} = \max_{1 \leq j \leq H_{n-1}} L_j$ denotes the maximum hidden state dimension, and $s_n$ denotes the input dimension of $\bgamma^{*}$.

    \item The network weights are polynomially bounded: $\norm{\bbeta^{*}}_{\infty} \leq E_n$, where $E_n = n^{C_1}$ for some constant $C_1 > 0$.
\end{itemize}
\end{assumption}

\begin{remark}
\label{remark: assump2}
Assumption \ref{assump2} is identical to assumption A.2 of \cite{sun2021consistent}, which limits the connectivity of the true RNN model to be of  $o(n^{1-\epsilon})$ for some $0<\epsilon<1$. Then, as implied by  
Lemma \ref{lemma: struc_con_main}, an RNN of size $O(n/\log(n))$ has been large enough 
for modeling a time series sequence of length $n$. Refer to \cite{sun2021consistent} for 
discussions on the universal approximation ability of the neural network under this assumption;
the universal approximation ability can still hold for many classes of functions, such as affine function, 
piecewise smooth function, and bounded $\alpha$-H\"older smooth function. 
\end{remark}

\begin{assumption}
\label{assump3}
The activation function $\psi^h$ is bounded for $h=1$ (e.g., sigmoid and tanh), and is Lipschitz continuous with a Lipschitz constant of $1$ for $2 \leq h \leq H_n$ (e.g., ReLU, sigmoid and tanh).
\end{assumption}

\begin{remark}
\label{remark: assump3}
Assumption \ref{assump3} mirrors \cite{sun2021consistent, sun2021sparse}, except that we require the activation function for the first layer to be bounded. This extra assumption can be viewed 
as a replacement for the boundedness assumption for the input variables of a conventional DNN. 
\end{remark}



Let  $d(p_1, p_2)$ denote the integrated Hellinger distance between two conditional densities $p_1(y|\bx)$ and $p_2(y|\bx)$. Let $\pi(\cdot|D_n)$ be the posterior probability of an event. Theorem \ref{thm:pos_con_main} establishes posterior consistency for the RNN model 
with the mixture Gaussian prior (\ref{mg}). 


\begin{theorem}
\label{thm:pos_con_main}
Suppose Assumptions \ref{assump1}, \ref{assump2}, and \ref{assump3} hold. If the mixture Gaussian prior (\ref{mg}) satisfies the conditions $\vcentcolon \lambda_n = O(1/[M_l^{H_n}K_n[n^{2M_lH_n}(\bar{L}p_n)]^{\tau}])$ for some constant $\tau > 0$, $E_n/[H_n\log{n} + \log{\bar{L}}]^{1/2} \leq \sigma_{1,n} \leq n^{\alpha}$ for some constant $\alpha$, and $\sigma_{0,n} \leq \min\{1/[M_l^{H_n}\sqrt{n}K_n(n^{3/2}\sigma_{1,n}/H_n)^{2M_lH_n}], 1/[M_l^{H_n}\sqrt{n}K_n(nE_n/H_n)^{2M_lH_n}]\}$, then there exists an an error sequence $\epsilon_n^2 = O(\varpi^2_n) + O(\zeta_n^2)$ such that $\lim_{n \to \infty} \epsilon_n = 0$ and $\lim_{n \to \infty} n\epsilon_n^2 = \infty$, and the posterior distribution satisfies
\begin{align}
    \pi(d(p_{\bbeta}, p_{\mu^{*}}) \geq C\epsilon_n | D_n) = O_{P^{*}}(e^{-n\epsilon_n^2})
\end{align}
for sufficiently large $n$ and $C > 0$, where $\zeta_n^2 = [r_nH_n\log{n} + r_n\log{\bar{L}} + s_n\log{p_n}]/n$, $p_{\mu^{*}}$ denotes the underlying true data distribution, and $p_{\bbeta}$ denotes the data distribution reconstructed by the Bayesian RNN based on its posterior samples.
\end{theorem}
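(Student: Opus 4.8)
The plan is to follow the general posterior-contraction machinery of Ghosal, Ghosh and van der Vaart, in the form adapted to dependent observations (see \cite{ghosal2007convergence}), which is precisely why Assumption \ref{assump1} imposes $\alpha$-mixing with exponentially decaying coefficients. Writing $p_{\bbeta}$ for the conditional density of $y_i$ given its past induced by a network $(\bbeta,\bgamma)$ and $p_{\mu^{*}}$ for the truth, the argument reduces to verifying three ingredients at the rate $\epsilon_n$: (i) a \emph{prior-mass} bound showing that $\pi$ charges a Kullback--Leibler neighborhood $\{p_{\bbeta}: \mathrm{KL}(p_{\mu^{*}},p_{\bbeta})\lesssim \epsilon_n^2\}$ with probability at least $e^{-c n\epsilon_n^2}$; (ii) a \emph{sieve} $\mathcal{F}_n$ of networks whose weights and connectivity are controlled, with exponentially small complement mass $\pi(\mathcal{F}_n^c)=O(e^{-Cn\epsilon_n^2})$; and (iii) an \emph{entropy/testing} bound producing tests of $p_{\mu^{*}}$ against $\{d(p_{\bbeta},p_{\mu^{*}})\ge C\epsilon_n\}\cap\mathcal{F}_n$ whose type-I and type-II errors decay like $e^{-bn\epsilon_n^2}$. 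The decomposition $\epsilon_n^2=O(\varpi_n^2)+O(\zeta_n^2)$ is anticipated from the two error sources: $\varpi_n$ is the approximation error of the true RNN in (\ref{true_drnn}), while $\zeta_n^2=[r_nH_n\log{n} + r_n\log{\bar{L}} + s_n\log{p_n}]/n$ is the statistical (estimation) rate driven by the effective dimension of the sieve.

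For ingredient (i) I would combine the approximation guarantee with a prior-concentration computation. By (\ref{true_drnn}) the true sparse network satisfies $\norm{\mu(\bx_{i:i-M_l+1},\bbeta^{*},\bgamma^{*})-\mu^{*}(y_{i-1:i-l})}_{L_2(\Omega)}\le\varpi_n$; since $\eta_i$ is Gaussian, the conditional KL divergence is controlled by the squared $L_2$ error, so it remains to show that the mixture-Gaussian prior (\ref{mg}) places mass at least $e^{-c n\epsilon_n^2}$ on a small $\ell_\infty$ ball around $\bbeta^{*}$. This is where the hyperparameter conditions enter quantitatively: $\lambda_n$ must be small enough that inactive weights concentrate near zero yet large enough that the $r_n$ active weights receive slab mass, while $E_n/[H_n\log{n}+\log{\bar{L}}]^{1/2}\le\sigma_{1,n}$ guarantees the slab covers the polynomially bounded weights of Assumption \ref{assump2}. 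The key computation is translating a weight perturbation into a perturbation of $\mu$: because the RNN (\ref{drnn}) is an unrolled network of effective depth $\sim M_l H_n$ with shared weights, a perturbation propagates multiplicatively, and I would use Assumption \ref{assump3} (first-layer activation bounded, the remaining layers $1$-Lipschitz) to bound the hidden states $\bz_i^h$ and the per-step Lipschitz constant; this is the origin of the $M_l^{H_n}$ and $(\cdot)^{2M_lH_n}$ factors in the conditions on $\sigma_{0,n}$ and $\lambda_n$. For ingredient (ii) I would take $\mathcal{F}_n=\{(\bbeta,\bgamma): |\bgamma|\le c\, r_n,\ \norm{\bbeta}_\infty\le E_n\}$ and show that the spike component, with $\sigma_{0,n}$ as small as the theorem requires, makes over-connected or over-large networks exponentially unlikely. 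For the entropy in (iii), the same Lipschitz/boundedness propagation bounds how $p_{\bbeta}$ varies over $\mathcal{F}_n$, yielding a Hellinger covering number whose logarithm is of order $n\zeta_n^2$, matching the target rate.

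The main obstacle, and the genuinely new work relative to the i.i.d.\ theory of \cite{sun2021consistent,sun2021sparse}, is concentrated in ingredients (i) and (iii) and is twofold. First, the recurrence makes the estimation error accumulate across up to $M_l$ time steps: naive propagation gives a bound exponential in $M_l H_n$, so the boundedness of the first-layer activation (Assumption \ref{assump3}) is essential, playing the role that boundedness of the inputs plays for a feedforward net, keeping the hidden states bounded uniformly in $i$ and taming the otherwise explosive constants; the tuned powers of $M_l$ in the hyperparameters are exactly the price paid to absorb this accumulation into $\epsilon_n$. Second, exponentially powerful tests must be established under dependence rather than independence. I would obtain these by a blocking argument: partition the sequence into alternating big and small blocks, use the exponentially decaying $\alpha$-mixing coefficient to couple the big blocks to an independent sequence at a cost negligible on the $e^{-bn\epsilon_n^2}$ scale, and then apply the standard Le Cam--Birg\'e Hellinger-testing construction to the decoupled blocks. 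Verifying that the mixing-induced coupling error is dominated by the testing error, so that the dependent-data contraction theorem of \cite{ghosal2007convergence} applies, is the step I expect to require the most care.
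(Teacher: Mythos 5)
Your plan follows essentially the same route as the paper: its proof verifies exactly your three ingredients (prior mass near the truth, exponentially small prior mass on the sieve's complement, and a metric-entropy bound) within the dependent-data contraction framework of Ghosal--van der Vaart, with the $M_l^{H_n}$ and $(\cdot)^{2M_lH_n}$ factors arising, as you predict, from output- and perturbation-propagation lemmas for the unrolled sparse RNN that rely on the bounded first-layer activation. The only divergences are minor: the paper states the prior-mass condition as an $\|p-p^{*}\|_s$-ball with $s>2$ (the form required by the mixing-data theorems it cites) rather than a KL-ball, and it does not carry out your blocking/coupling construction of tests explicitly, instead invoking the corresponding results for non-i.i.d.\ data from Ghosal and van der Vaart (2017) wholesale.
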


\subsection{Uncertainty Quantification with Sparse RNNs}


As mentioned previously, posterior consistency forms the basis for Bayesian inference with the 
RNN model. Based on Theorem \ref{thm:pos_con_main}, we further establish structure selection consistency and asymptotic normality of connection weights and predictions for the sparse RNN. 
In particular, the asymptotic normality of predictions enables the prediction intervals with  
correct coverage rates to be constructed. 

\paragraph{Structure Selection Consistency}\label{main: stru_sele_con}
It is known that the neural network often suffers from a non-identifiability issue due to the symmetry of its structure. For instance, one can permute certain hidden nodes or simultaneously change the signs or scales of certain weights while keeping the output of the neural network invariant. To address this issue, we follow \cite{sun2021consistent} to define an equivalent class of RNNs, denoted by $\Theta$, which is a set of RNNs such that any possible RNN for the problem can be represented by one and only one RNN in $\Theta$ via appropriate weight transformations. Let $\nu(\bbeta, \bgamma) \in \Theta$ denote an operator that maps any RNN to $\Theta$. To serve the purpose of structure selection in the space $\Theta$, we consider the marginal posterior inclusion probability (MIPP) approach. Formally, for each connection weight $i = 1,\dots,K_n $, we define its MIPP as $q_i = \int \sum_{\bgamma} e_{i|\nu(\bbeta, \bgamma)}\pi(\bgamma|\bbeta, D_n)\pi(\bbeta |D_n)d\bbeta$, where $e_{i|\nu(\bbeta, \bgamma)}$ is the indicator of $i$ in $\nu(\bbeta, \bgamma)$. The MIPP approach selects the connections whose MIPPs exceed a threshold $\hat{q}$.
 Let $\hat{\bgamma}_{\hat{q}} = \{i : q_i > \hat{q}, i = 1,\dots,K_n\}$ denote an estimator of $\bgamma_{*} = \{i : e_{i|\nu(\bbeta^{*}, \bgamma^{*})}=1, i=1,\dots,K_n\}$. Let $A(\epsilon_n) = \{\bbeta : d(p_{\bbeta}, p_{\mu^{*}}) \leq \epsilon_n\}$ and $\rho(\epsilon_n) = \max_{1 \leq i\leq K_n} \int_{A(\epsilon_n)} \sum_{\bgamma} |e_{i|\nu(\bbeta, \bgamma)} - e_{i|\nu(\bbeta^{*}, \bgamma^{*})}|\pi(\bgamma|\bbeta, D_n)\pi(\bbeta|D_n)d\bbeta$, which measures the structure difference on $A(\epsilon)$ for the true RNN from those sampled from the posterior. Then we have the following Lemma:

\begin{lemma}
\label{lemma: struc_con_main}
If the conditions of Theorem \ref{thm:pos_con_main} are satisfied and $\rho(\epsilon_n) \to 0$ as $n \to \infty$, then 
\begin{itemize}
    \item[(i)] $\max_{1 \leq i\leq K_n} \{|q_i - e_{i|\nu(\bbeta^{*}, \bgamma^{*})}|\} \overset{p}{\to} 0$ as $n \to \infty$;
    \item[(ii)] (sure screening) $P(\bgamma_{*} \subset \hat{\bgamma}_{\hat{q}}) \overset{p}{\to} 1$ as $n\to \infty$, for any prespecified $\hat{q} \in (0,1)$;
    \item[(iii)] (consistency) $P(\bgamma_{*} = \hat{\bgamma}_{0.5}) \overset{p}{\to} 1$ as $n\to \infty$;
\end{itemize}
where $\stackrel{p}{\to}$ denotes convergence in probability. 
\end{lemma}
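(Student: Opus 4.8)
The plan is to derive Lemma \ref{lemma: struc_con_main} as a consequence of the posterior consistency established in Theorem \ref{thm:pos_con_main}, following the structure-selection argument of \cite{sun2021consistent, sun2021sparse} and adapting it to the dependent-data setting. The three claims are logically nested: (i) the uniform convergence of the MIPPs to the true inclusion indicators is the core estimate, and both the sure-screening property (ii) and the exact-recovery property (iii) will follow from (i) by elementary thresholding arguments. So I would spend almost all the effort on (i) and treat (ii)--(iii) as short corollaries.

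For part (i), the first step is to split each MIPP error using the consistency set $A(\epsilon_n) = \{\bbeta : d(p_{\bbeta}, p_{\mu^{*}}) \leq \epsilon_n\}$. Writing $q_i - e_{i|\nu(\bbeta^{*}, \bgamma^{*})} = \int \sum_{\bgamma} (e_{i|\nu(\bbeta, \bgamma)} - e_{i|\nu(\bbeta^{*}, \bgamma^{*})}) \pi(\bgamma|\bbeta, D_n)\pi(\bbeta|D_n)\,d\bbeta$, I would decompose the integral over $\bbeta$ into the region $A(\epsilon_n)$ and its complement $A(\epsilon_n)^c$. On $A(\epsilon_n)$ the integrand is controlled directly by $\rho(\epsilon_n)$, which is assumed to vanish. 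On the complement, the integrand is trivially bounded by $1$ in absolute value (since the $e$'s are indicators), so the whole contribution is at most $\pi(A(\epsilon_n)^c \mid D_n) = \pi(d(p_{\bbeta}, p_{\mu^{*}}) > \epsilon_n \mid D_n)$. By Theorem \ref{thm:pos_con_main} this posterior mass is $O_{P^{*}}(e^{-n\epsilon_n^2})$, which converges to $0$ in $P^{*}$-probability because $n\epsilon_n^2 \to \infty$. Taking the maximum over $i$ and combining the two bounds gives $\max_i |q_i - e_{i|\nu(\bbeta^{*}, \bgamma^{*})}| \leq \rho(\epsilon_n) + O_{P^{*}}(e^{-n\epsilon_n^2}) \overset{p}{\to} 0$, which is (i).

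Given (i), part (ii) follows because for every index $i \in \bgamma_{*}$ we have $e_{i|\nu(\bbeta^{*}, \bgamma^{*})} = 1$, so uniform convergence forces $q_i \to 1$ in probability, eventually exceeding any fixed threshold $\hat q \in (0,1)$; a union bound over the at-most-$K_n$ relevant coordinates (whose cardinality grows slowly enough under Assumption \ref{assump2}) then yields $P(\bgamma_{*} \subset \hat{\bgamma}_{\hat q}) \to 1$. Part (iii) additionally requires ruling out false inclusions: for $i \notin \bgamma_{*}$ we have $e_{i|\nu(\bbeta^{*}, \bgamma^{*})} = 0$, so (i) gives $q_i \to 0$, placing it below the threshold $0.5$; combining this with the sure-screening direction at threshold $0.5$ shows $\hat{\bgamma}_{0.5}$ equals $\bgamma_{*}$ with probability tending to $1$.

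The main obstacle is not the decomposition itself, which is essentially bookkeeping, but ensuring that the hypothesis $\rho(\epsilon_n) \to 0$ is legitimately available and that the posterior-mass bound from Theorem \ref{thm:pos_con_main} genuinely transfers to the dependent-data regime. In particular, the quantity $\rho(\epsilon_n)$ is defined through the identifiability operator $\nu(\cdot)$ on the equivalence class $\Theta$, and the delicate point is that two RNNs that are close in Hellinger distance must have nearly identical images under $\nu$ on the consistency set --- this is where the stationarity and $\alpha$-mixing structure of Assumption \ref{assump1} enters, since the integrated Hellinger distance and the resulting posterior concentration must be controlled over the dependent sequence $D_n$ rather than i.i.d.\ samples. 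I would therefore flag the verification of $\rho(\epsilon_n) \to 0$, and the uniformity of the exponential posterior-mass bound in the presence of temporal dependence, as the steps requiring the most care.
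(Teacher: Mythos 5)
Your proposal is correct and follows exactly the argument this paper relies on: the paper itself omits a written proof of this lemma and defers to the structure-selection argument of \cite{sun2021consistent}, which is precisely your decomposition of $q_i - e_{i|\nu(\bbeta^{*},\bgamma^{*})}$ over $A(\epsilon_n)$ and its complement, bounding the first piece by $\rho(\epsilon_n)$ (a hypothesis, so nothing further to verify there) and the second by the posterior tail mass from Theorem \ref{thm:pos_con_main}, with (ii) and (iii) following by thresholding. The only superfluous element is the union bound in (ii): since (i) is already a uniform statement in $i$, the event $\{\max_i |q_i - e_{i|\nu(\bbeta^{*},\bgamma^{*})}| < 1-\hat q\}$ directly implies sure screening.
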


\begin{remark} \label{remark: strucon}
This lemma implies that we can filter out irrelevant variables and simplify the RNN structure when appropriate. Please refer to Section \ref{ARorderselection} 
for a numerical illustration. 
\end{remark}

\paragraph{Asymptotic Normality of Connection Weights and Predictions}
The following two Theorems establish the asymptotic normality of $\tilde{\nu}(\bbeta)$ and predictions, where $\tilde{\nu}(\bbeta)$ denotes a transformation of $\bbeta$ which is invariant with respect to $\mu(\cdot, \bbeta, \bgamma)$ while minimizing $\norm{\tilde{\nu}(\bbeta) - \bbeta^{*}}_{\infty}$.

We follow the same definition of asymptotic normality as in \cite{sun2021sparse, castillo2015bernstein, wang2020uncertainty}. The posterior distribution for the function $g(\bbeta)$ is asymptotically normal with center $g^{*}$ and variance $G$ if, for $d_{\bbeta}$ the bounded Lipschitz metric for weak convergence, and $\phi_n$ the mapping $\phi_n : \bbeta \to \sqrt{n}(g(\bbeta) - g^{*})$, it holds, as $n \to \infty$, that 
\begin{align}
    d_{\bbeta}(\pi(\cdot | D_n) \circ \phi_n^{-1}, N(0, G)) \to 0,
\end{align}
in $P^{*}$-probability, which we also denote $\pi(\cdot | D_n) \circ \phi_n^{-1} \leadsto N(0, G)$.

The detailed assumptions and setups for the following two theorems are given in Appendix \ref{appendix: asymptotic_normality}. For simplicity, we let $M_l = R_l + 1$, and let $l_n(\bbeta) = \frac{1}{n-R_l} \sum_{i=R_l+1}^{n} \log(p_{\bbeta}(y_i|y_{i-1:i-R_l}))$ denote the averaged log-likelihood function. Let $H_n(\bbeta)$ denote the Hessian matrix of $l_n(\bbeta)$, and let $h_{i,j}(\bbeta)$ and $h^{i,j}(\bbeta)$ denote the $(i,j)$-th element  of $H_n(\bbeta)$ and $H_n^{-1}(\bbeta)$, respectively.

\begin{theorem}
\label{thm: asym_w}
    Assume the conditions of Lemma \ref{lemma: struc_con_main} hold with $\rho(\epsilon_n) = o(\dfrac{1}{K_n})$ and additional assumptions hold given in Appendix \ref{appendix: asymptotic_normality}, then $\pi(\sqrt{n}(\tilde{\nu}(\bbeta) - \bbeta^{*})|D_n) \leadsto N(\bZeros, \bV)$ in $P^{*}$-probability as $n \to \infty$, where $\bV = (v_{i,j})$, and $v_{i,j} = E(h^{i,j}(\bbeta^{*}))$ if $i, j \in \bgamma^{*}$ and $0$ otherwise.
\end{theorem}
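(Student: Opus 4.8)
The plan is to prove a Bernstein--von Mises (BvM) theorem for the restricted weight vector, adapting the argument of \cite{sun2021sparse} from i.i.d.\ data to the $\alpha$-mixing setting of Assumption~\ref{assump1}. The overall strategy has three stages. First, I would use the structure selection consistency of Lemma~\ref{lemma: struc_con_main} to reduce the problem to a posterior effectively supported on the active set $\bgamma^{*}$: since $\rho(\epsilon_n) = o(1/K_n)$, the posterior probability that the realized structure differs from $\bgamma^{*}$ on any given coordinate is negligible even after a union bound over all $K_n$ weights, so the limiting law is governed by the subvector indexed by $\bgamma^{*}$, and the off-support blocks of $\bV$ are zero. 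The transformation $\tilde{\nu}(\bbeta)$ resolves the non-identifiability by selecting the representative of each equivalence class nearest to $\bbeta^{*}$, so that the subsequent local expansion is anchored at a well-defined point.

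Second, on the active coordinates I would reparametrize $\bbeta = \bbeta^{*} + \bh/\sqrt{n}$ and establish a local asymptotic normality (LAN) expansion by Taylor-expanding the total log-likelihood $(n-R_l)\,l_n$ about $\bbeta^{*}$,
\begin{equation}
(n-R_l)\big[\,l_n(\bbeta^{*} + \bh/\sqrt{n}) - l_n(\bbeta^{*})\,\big] = \bh^{\top}\bg_n - \tfrac{1}{2}\bh^{\top}\bG_n\bh + r_n(\bh),
\end{equation}
where $\bg_n$ is the rescaled score and $\bG_n$ the rescaled negative Hessian, both restricted to $\bgamma^{*}$, and $r_n(\bh)$ is a third-order remainder. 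The crucial observation for handling the dependence is that, because model~(\ref{timemodel}) is a correctly specified conditional model with i.i.d.\ Gaussian innovations, the per-step score $\nabla \log p_{\bbeta^{*}}(y_i \mid y_{i-1:i-R_l})$ has conditional mean zero given the past, and hence the summands of $\bg_n$ form a stationary martingale difference sequence. A martingale central limit theorem (with stationarity and ergodicity supplied by the exponentially decaying $\alpha$-mixing coefficients) then yields $\bg_n \leadsto N(\bZeros, E(\bG_n))$ with \emph{no} long-run-variance cross terms, which is precisely what produces the clean inverse-Fisher covariance $\bV$ with $v_{i,j} = E(h^{i,j}(\bbeta^{*}))$. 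For the quadratic term, a uniform law of large numbers for $\alpha$-mixing stationary sequences (in the spirit of \cite{yu1994rates, meir1997performance}) gives $\bG_n \to E(\bG_n)$ uniformly over a shrinking neighborhood of $\bbeta^{*}$, and the same mixing inequalities are used to bound $r_n(\bh) = o_{P^{*}}(1)$ locally.

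Third, I would combine the LAN expansion with a Laplace argument for the mixture Gaussian prior~(\ref{mg}). On the active coordinates the prior is essentially $N(0,\sigma^{2}_{1,n})$ with $\sigma_{1,n}$ large, so over the $O(1/\sqrt{n})$ neighborhood its density is asymptotically constant and does not affect the limit; the joint contribution of the inactive coordinates (prior $N(0,\sigma^{2}_{0,n})$ with $\sigma_{0,n}$ tiny) together with the event of an incorrect structure is $o_{P^{*}}(1)$ by the preceding reduction. Restricting to the set $A(\epsilon_n)$ where posterior consistency (Theorem~\ref{thm:pos_con_main}) holds, integrating the exponentiated expansion against the prior, and measuring the discrepancy in the bounded Lipschitz metric $d_{\bbeta}$ then yields $\pi(\sqrt{n}(\tilde{\nu}(\bbeta) - \bbeta^{*}) \mid D_n) \leadsto N(\bZeros, \bV)$ in $P^{*}$-probability.

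The hardest part will be the uniform control of the Hessian $\bG_n$ and of the remainder $r_n(\bh)$ under dependence while the dimension grows, since the standard empirical-process tools used in the i.i.d.\ case must be replaced by covariance and blocking bounds for $\alpha$-mixing sequences; the exponential decay of the mixing coefficients is exactly what is needed for the concentration to outpace the network growth. The martingale structure makes the score's CLT comparatively routine, so the genuine bottleneck is verifying that the appendix regularity conditions (bounds on the higher derivatives of the conditional log-likelihood, boundedness of the first-layer activation per Assumption~\ref{assump3}) are strong enough to render these uniform bounds valid for the sparse RNN.
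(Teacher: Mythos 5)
Your proposal is architecturally the same as the paper's: both follow the route of \cite{sun2021sparse} --- reduce to the active set $\bgamma^{*}$ via the structure selection consistency of Lemma \ref{lemma: struc_con_main}, expand the log-likelihood locally around $\bbeta^{*}$, and finish with a Laplace/Bernstein--von Mises argument in which the mixture Gaussian prior is asymptotically flat on the active coordinates. Where you diverge is in how the dependence is handled. The paper's proof is essentially a citation swap: it observes that the only place independence enters the argument of \cite{sun2021sparse} is Theorem 2.1 of \cite{portnoy1988asymptotic} (consistency and asymptotic normality of the MLE when the dimension $r_n$ grows), and replaces it by Theorem 2 of \cite{saikkonen1995dependent}, importing that theorem's hypotheses wholesale as condition C.4 --- notably the strengthened growth condition $r_n^2/n \to 0$ in place of $r_n/n \to 0$. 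You instead propose to prove the required limit theorems directly: a martingale CLT for the score (your observation that the per-step score is a martingale difference under the correctly specified conditional model, so no long-run-variance cross terms arise, is sound and is implicitly why the clean inverse-Fisher covariance $\bV$ emerges) together with mixing-based uniform laws of large numbers for the Hessian and the cubic remainder. This is more self-contained, but the step you yourself flag as the bottleneck --- uniform control of $\bG_n$ and $r_n(\bh)$ under dependence while the active dimension grows --- is exactly the content of the external theorem the paper invokes, and your sketch supplies neither the blocking argument nor the dimension condition $r_n^2/n \to 0$ that makes it go through. To complete your route you would either have to reprove a Portnoy-type result for $\alpha$-mixing sequences or, more economically, fall back on the same citation the paper uses.
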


Theorem \ref{thm: asym_p} establishes asymptotic normality of the sparse RNN prediction, which implies prediction consistency and  forms the theoretical basis for prediction uncertainty quantification as well. 

\begin{theorem}
\label{thm: asym_p}
Assume the conditions of Theorem \ref{thm: asym_w} and additional assumptions hold given in Appendix \ref{appendix: asymptotic_normality}. Then $\pi(\sqrt{n}(\mu(\cdot, \bbeta) - \mu(\cdot, \bbeta^{*})|D_n) \leadsto N(0, \Sigma)$, where $\Sigma = \nabla_{\bgamma^{*}}\mu(\cdot, \bbeta^{*})^{\prime}H^{-1}\nabla_{\bgamma^{*}}\mu(\cdot, \bbeta^{*})$ and $H = E(-\nabla^{2}_{\bgamma^{*}}l_n(\bbeta^{*}))$ is the Fisher information matrix.
\end{theorem}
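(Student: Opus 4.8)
The plan is to obtain Theorem~\ref{thm: asym_p} as a delta-method consequence of the weight-level asymptotic normality already secured in Theorem~\ref{thm: asym_w}. The crucial structural fact is that the network output is invariant under the equivalence-class transformation, so that $\mu(\cdot, \bbeta) = \mu(\cdot, \tilde{\nu}(\bbeta))$, which lets me work with $\tilde{\nu}(\bbeta)$ throughout and thereby sidestep the non-identifiability. First I would Taylor-expand the prediction around $\bbeta^{*}$:
\begin{align}
\mu(\cdot, \tilde{\nu}(\bbeta)) - \mu(\cdot, \bbeta^{*}) = \nabla\mu(\cdot, \bbeta^{*})^{\prime}(\tilde{\nu}(\bbeta) - \bbeta^{*}) + R_n,
\end{align}
where $R_n$ is the second-order remainder evaluated at an intermediate point on the segment joining $\bbeta^{*}$ and $\tilde{\nu}(\bbeta)$.

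Next I would invoke the structure selection consistency of Lemma~\ref{lemma: struc_con_main}, strengthened by the hypothesis $\rho(\epsilon_n) = o(1/K_n)$ inherited from Theorem~\ref{thm: asym_w}, to argue that the posterior concentrates on the equivalence class whose support is $\bgamma^{*}$. Consequently the off-support coordinates of $\nabla\mu(\cdot, \bbeta^{*})$ contribute negligibly and the linear term reduces to $\nabla_{\bgamma^{*}}\mu(\cdot, \bbeta^{*})^{\prime}(\tilde{\nu}(\bbeta) - \bbeta^{*})_{\bgamma^{*}}$. Because the limit is scalar and the functional $\bbeta \mapsto \nabla_{\bgamma^{*}}\mu(\cdot, \bbeta^{*})^{\prime}(\bbeta - \bbeta^{*})_{\bgamma^{*}}$ is linear, weak convergence transfers directly through it: pushing the Gaussian limit of Theorem~\ref{thm: asym_w} forward, and noting that the restriction of $\bV$ to $\bgamma^{*}$ coincides with $H^{-1}$, gives $\sqrt{n}\,\nabla_{\bgamma^{*}}\mu(\cdot, \bbeta^{*})^{\prime}(\tilde{\nu}(\bbeta) - \bbeta^{*})_{\bgamma^{*}} \leadsto N(0, \Sigma)$ with $\Sigma = \nabla_{\bgamma^{*}}\mu(\cdot, \bbeta^{*})^{\prime}H^{-1}\nabla_{\bgamma^{*}}\mu(\cdot, \bbeta^{*})$. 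A final application of Slutsky's lemma then combines this with the negligibility of the remainder to conclude.

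The main obstacle is controlling the remainder $R_n$: I must show $\sqrt{n}\,R_n \to 0$ in $P^{*}$-probability under the posterior. Writing $R_n = \tfrac{1}{2}(\tilde{\nu}(\bbeta) - \bbeta^{*})_{\bgamma^{*}}^{\prime}\nabla^{2}_{\bgamma^{*}}\mu(\cdot, \tilde{\bbeta})(\tilde{\nu}(\bbeta) - \bbeta^{*})_{\bgamma^{*}}$ for an intermediate $\tilde{\bbeta}$, the posterior contraction rate from Theorem~\ref{thm:pos_con_main} gives $\|(\tilde{\nu}(\bbeta) - \bbeta^{*})_{\bgamma^{*}}\| = O_{P^{*}}(\sqrt{r_n/n})$, so $\sqrt{n}\,R_n$ is of order $\sqrt{n}\cdot(r_n/n)\cdot\sup\|\nabla^{2}_{\bgamma^{*}}\mu\|$. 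Making this vanish requires a uniform spectral-norm bound on the Hessian of the network output over a shrinking neighborhood of $\bbeta^{*}$, which is precisely where the polynomial weight bound $E_n = n^{C_1}$, the boundedness and Lipschitz activation conditions of Assumption~\ref{assump3}, and the additional regularity assumptions of Appendix~\ref{appendix: asymptotic_normality} are needed. Establishing this bound and verifying its compatibility with the sparsity budget $r_n H_n\log n = O(n^{1-\epsilon})$, so that $r_n/\sqrt{n}$ still overwhelms the Hessian growth, is the technical crux; once it is in hand, the delta-method bookkeeping in the preceding paragraph is routine.
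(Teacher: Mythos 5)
Your overall route---a delta-method reduction of the prediction CLT to the weight-level CLT of Theorem \ref{thm: asym_w}---is exactly the paper's route. The paper's proof is a one-paragraph deferral to \cite{sun2021sparse}, whose argument for the analogous statement is precisely the first-order Taylor expansion you write down; the only time-series-specific modification (replacing Portnoy's theorem by Saikkonen's, via condition C.4) occurs upstream in the proof of Theorem \ref{thm: asym_w}, which you correctly take as given. Your identification of the linear term's limit and of $\bV$ restricted to $\bgamma^{*}$ with $H^{-1}$ matches the paper's usage.

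The one place where your plan diverges, and where as written it would not close, is the remainder control. You propose to \emph{derive} a uniform bound on $\nabla^{2}_{\bgamma^{*}}\mu$ from the polynomial weight bound $E_n=n^{C_1}$, Assumption \ref{assump3}, and the sparsity budget; the paper does not attempt this, and the natural recursive estimates (cf.\ Lemmas \ref{lemma: os} and \ref{lemma: odiff}) would give derivative bounds growing like $E_n^{\mathrm{poly}(H_n)}$ rather than a constant. Instead the needed bounds are simply \emph{assumed} in Assumption \ref{normalityass:2}: $|\mu_{i,j}(\cdot,\bbeta)|<M$ elementwise on $B_{2\delta_n}(\bbeta^{*})$, together with $|\mu_k(\cdot,\bbeta)|<M$ for $k\notin\bgamma^{*}$ (the latter, combined with the fact that off-support coordinates of $\tilde{\nu}(\bbeta)$ lie within $2\sigma_{0,n}\log(\sigma_{1,n}/(\lambda_n\sigma_{0,n}))$ of zero, is what actually kills the off-support contribution---not structure selection consistency per se). Moreover your bookkeeping $\sqrt{n}\,R_n \asymp \sqrt{n}\cdot(r_n/n)\cdot\sup\|\nabla^{2}\mu\|$ does not vanish under the stated conditions: with the Euclidean rate $O(\sqrt{r_n/n})$ and an elementwise bound $M$ (hence spectral norm as large as $Mr_n$), you get $\sqrt{n}\,R_n=O(r_n^2/\sqrt{n})$, which C.4's $r_n^2/n\to 0$ does not control. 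The paper closes this differently: C.1 confines the posterior to the $\ell_\infty$ ball $\max_{i\in\bgamma^{*}}|\bbeta_i-\bbeta^{*}_i|\le\delta_n$ with $\delta_n\lesssim 1/(\sqrt[3]{n}\,r_n)$, whence $|R_n|\le M r_n^2\delta_n^2\lesssim M n^{-2/3}$ and $\sqrt{n}\,R_n\to 0$. So keep the skeleton, but replace ``establish the Hessian bound from the architecture'' with an invocation of Assumption \ref{normalityass:2}, and run the remainder bound through the $\ell_\infty$ neighborhood of C.1 rather than the $\ell_2$ contraction rate.
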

 

\section{Computation}
\label{main: computation}

In the preceding section, we established a theoretical foundation for sparse deep learning with time series data under the Bayesian framework. Building on \cite{sun2021consistent}, it is straightforward to show that Bayesian computation can be simplified by invoking the Laplace approximation theorem at the maximum {\it a posteriori} (MAP) estimator. This essentially transforms the proposed Bayesian method into a regularization method by interpreting the log-prior density function as a penalty for the log-likelihood function in RNN training. Consequently, we can train the regularized RNN model using an optimization algorithm, such as SGD or Adam. To address the local-trap issue potentially suffered by these methods, we train the regularized RNN using a prior annealing algorithm \cite{sun2021sparse}, as described in Algorithm \ref{alg: pa}. For a trained RNN, we sparsify its structure by truncating the weights less than a threshold to zero and further refine the nonzero weights for attaining the MAP estimator. For algorithmic specifics, refer to Appendix \ref{appendix: computation}. Below, we outline the steps for constructing prediction intervals for one-step-ahead forecasts,  where $\mu(y_{k-1:k-R_l}, \hat{\bbeta})$ is of one dimension, and  
$\hat{\bbeta}$ and $\hat{\bgamma}$ represent the estimators of the network parameters and structure, respectively, as obtained by Algorithm \ref{alg: pa}:



\begin{itemize}
    \item  Estimate $\sigma^2$ by
        $\hat{\sigma}^2 = \dfrac{1}{n-R_l-1}\sum_{i=R_l+1}^{n}\| y_i-\mu(y_{i-1:i-R_l}, \hat{\bbeta})\|^2$.
    
    \item For a test point $y_{k-1:k-R_l}$, estimate $\Sigma$ in Theorem \ref{thm: asym_p} by
    \begin{align*}
        \hat{\varsigma}^2 = \nabla_{\hat{\bgamma}}\mu(y_{k-1:k-R_l}, \hat{\bbeta})^{\prime}(-\nabla^2_{\hat{\bgamma}}l_n(\hat{\bbeta}))^{-1}\nabla_{\hat{\bgamma}}\mu(y_{k-1:k-R_l}, \hat{\bbeta}).
    \end{align*}

    \item The corresponding $(1-\alpha)\%$ prediction interval is given by
    \begin{align*}
       \mu(y_{k-1:k-R_l}, \hat{\bbeta}) \pm z_{\alpha/2}\sqrt{\hat{\varsigma}^2/(n-R_l) + \hat{\sigma}^2},
    \end{align*}
    where there are $n-R_l$ observations used in training, and $z_{\alpha/2}$ denotes the upper $\alpha/2$-quantile of the standard Gaussian distribution.
\end{itemize}
For construction of multi-horizon prediction intervals, see Appendix \ref{appendix: joint_pred_interval}.

\section{Numerical Experiments}
\label{main: experiments}



\subsection{Uncertainty Quantification}

As mentioned in Section \ref{main: theory}, we will consider two types of time series datasets: the first type comprises a single time series, and the second type consists of a set of time series. We will compare the performance of our method against the state-of-the-art Conformal Prediction (CP) methods for both types of datasets. We set $\alpha=0.1$ (the error rate) for all uncertainty quantification experiments in the paper, and so the nominal coverage level of the prediction intervals is 90\%. 

\subsubsection{A Single Time Series: French Electricity Spot Prices}

We perform one-step-ahead forecasts on the French electricity spot prices data from $2016$ to $2020$, which consists of 35,064 observations. A detailed description and visualization of this time series are given in Appendix \ref{appendix: fesp}. Our goal is to predict the $24$ hourly prices of the following day, given the prices up until the end of the current day. As the hourly prices exhibit distinct patterns, we fit one model per hour as in the CP baseline \cite{zaffran2022adaptive}. We follow the data splitting strategy used in \cite{zaffran2022adaptive}, where the first three years $2016-2019$ data are used as the (initial) training set, and the prediction is made for the last year $2019-2020$.


For all the methods considered, we use the same underlying neural network model: an MLP with one hidden layer of size 100 and the sigmoid activation function. More details on the training process are provided in Appendix \ref{appendix: fesp}. For the state-of-the-art CP methods, EnbPI-V2 \cite{xu2021conformal}, NEX-CP \cite{barber2023conformal}, ACI \cite{gibbs2021adaptive} and AgACI \cite{zaffran2022adaptive}, we conduct experiments in an online fashion, where the model is trained using a sliding window of the previous three years of data (refer to Figure \ref{fig: online&visualization} in the Appendix). Specifically, after constructing the prediction interval for each time step in the prediction period, we add the ground truth value to the training set and then retrain the model with the updated training set. For ACI, we conduct experiments with various values of $\gamma$ and present the one that yields the best performance. In the case of AgACI, we adopt the same aggregation approach as used in \cite{zaffran2022adaptive}, namely, the Bernstein Online Aggregation (BOA) method \cite{wintenberger2017optimal} with a gradient trick. We also report the performance of ACI with $\gamma = 0$ as a reference. For NEX-CP, we use the same weights as those employed in their time-series experiments. For EnbPI-V2, we tune the number of bootstrap models and select the one that offers the best performance.


Since this time series exhibits no or minor distribution shift, our method PA is trained in an offline fashion, where the model is fixed for using only the observations between $2016$ and $2019$, and the observations in the prediction  range are used only for final evaluation. That is, our method uses less data information in training compared to the baseline methods. 



The results are presented in Figure \ref{fig: fesp_results}, which includes empirical coverage (with methods that are positioned closer to 90.0 being more effective), median/average prediction interval length, and corresponding interquartile range/standard deviation. As expected, our method is able to train and calibrate the model by using only the initial training set, i.e., the data for 2016-2019,  and successfully produces faithful prediction intervals. 
In contrast, all CP methods produce wider prediction intervals than ours and higher coverage rates than the nominal level of 90\%. In addition, ACI is sensitive to the choice of  $\gamma$ \cite{zaffran2022adaptive}.

\begin{figure}[t]
    \centering
    \includegraphics[scale=0.2]{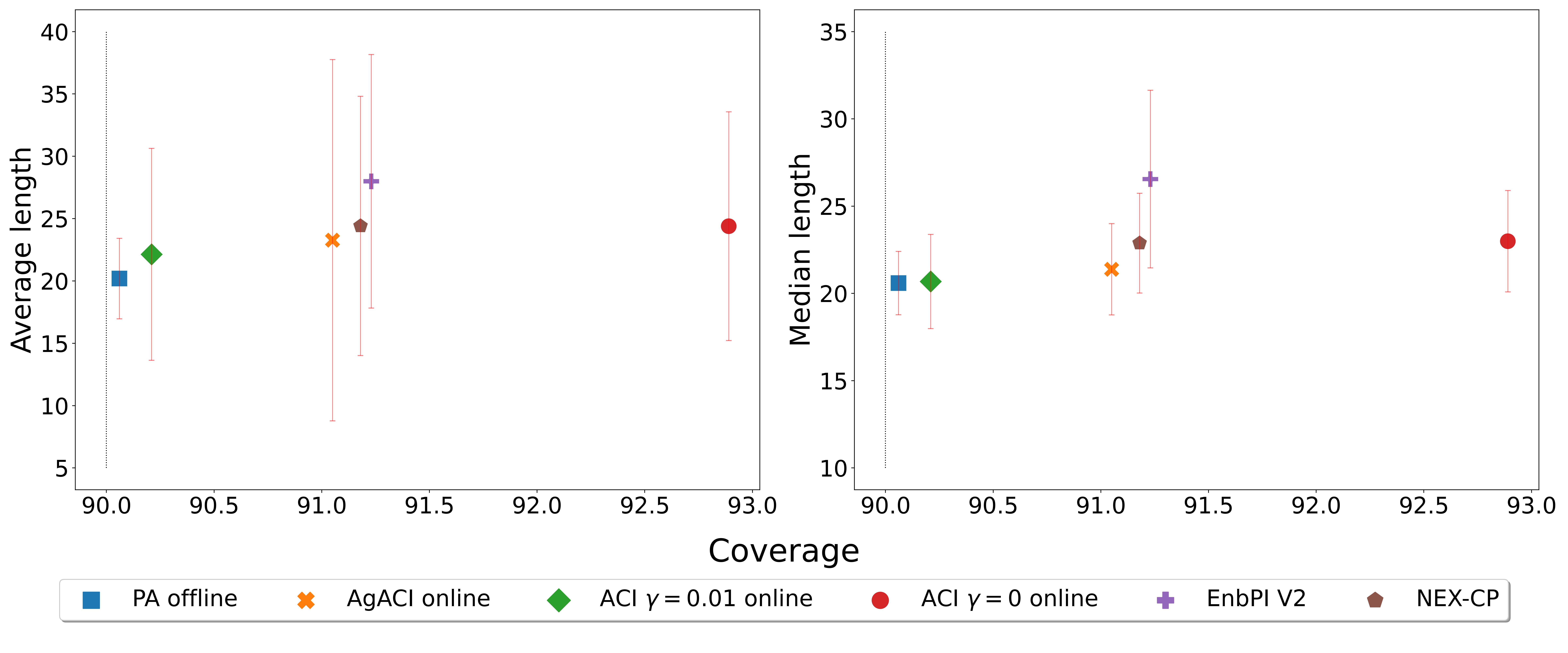}
     \caption{Comparisons of our method and baselines on hourly spot electricity prices in France. \textbf{Left}: Average prediction length with vertical error bars indicating the standard deviation of the prediction interval lengths. \textbf{Right}: Median prediction length with vertical error bars indicating the interquartile range of the prediction interval lengths. The precise values for these metrics are provided in Table \ref{table: fesp_results} of Appendix \ref{appendix: fesp}.}
    \label{fig: fesp_results}
\end{figure}

\subsubsection{A Set of Time Series}
\label{main: exp_set_of_ts}

We conduct experiments using three publicly available real-world datasets: Medical Information Mart for Intensive Care (MIMIC-III), electroencephalography (EEG) data, and daily COVID-19 case numbers within the United Kingdom's local authority districts (COVID-19) \cite{stankeviciute2021conformal}. A concise overview of these datasets is presented in Table \ref{table: datasets}. Our method, denoted as PA-RNN (since the underlying prediction model is an LSTM \cite{LSTM1997}), is compared to three benchmark methods: CF-RNN \cite{stankeviciute2021conformal}, MQ-RNN \cite{wen2017multi}, and DP-RNN \cite{gal2016dropout}, where LSTM is used for all these methods. To ensure a fair comparison, we adhere to the same model structure, hyperparameters, and data processing steps as specified in \cite{stankeviciute2021conformal}. Detailed information regarding the three datasets and training procedures can be found in Appendix \ref{appendix: set_of_ts}.


The numerical results are summarized in Tables \ref{table: set_of_ts_results}. Note that the baseline results for EEG and COVID-19 are directly taken from the original paper \cite{stankeviciute2021conformal}. We reproduce the baseline results for MIMIC-III, as the specific subset used by \cite{stankeviciute2021conformal} is not publicly available. 
Table \ref{table: set_of_ts_results} indicates that 
our method consistently outperforms the baselines. In particular, Our method consistently generates shorter prediction intervals compared to the conformal baseline CF-RNN while maintaining the same or even better coverage rate as CF-RNN. 
Both the MQ-RNN and DP-RNN methods fail to generate prediction intervals that accurately maintain a faithful coverage rate.
 


\begin{table}[htbp]
\vspace{-0.15in}
\caption{A brief description of the datasets used in Section \ref{main: exp_set_of_ts}, where 
``Train size'' and ``Test size'' indicate the numbers of training and testing sequences, respectively, and ``Length'' represents the length of the input sequence.}
\label{table: datasets}
\begin{center}
\begin{tabular}{ c c c c c} 
\toprule
\textbf{Dataset} & \textbf{Train size} & \textbf{Test size} & \textbf{Length} & \textbf{Prediction horizon}\\
\midrule
MIMIC-III \cite{johnson2016mimic} & $2692$ & $300$ & $[3, 47]$ & $2$\\
EEG \cite{blake1998uci} & $19200$ & $19200$ & $40$ & $10$\\
COVID-19 \cite{uk2020coronavirus} & $300$ & $80$ & $100$ & $50$\\
\bottomrule
\end{tabular}
\end{center}
\end{table}

\begin{table}[htbp]
\caption{Uncertainty quantification results by different methods for the MIMIC-III, EEG, and COVID-19 data, where ``Coverage'' represents the joint coverage rate averaged over five different random seeds, the prediction interval length is averaged across prediction horizons and five random seeds, and the numbers in the parentheses indicate the standard deviations (across five random seeds) of the respective means.}
\label{table: set_of_ts_results}
\begin{center}
\begin{adjustbox}{max width=\textwidth}
\begin{tabular}{@{}c | c c | c c | c c @{}}
\toprule
\multicolumn{1}{l}{} & \multicolumn{2}{c}{\textbf{MIMIC-III}} & \multicolumn{2}{c}{\textbf{EEG}}  & \multicolumn{2}{c}{\textbf{COVID-19}}\\
\midrule
\multicolumn{1}{l|}{\textbf{Model}} & Coverage & PI length & Coverage & PI length & Coverage & PI length\\
\midrule
PA-RNN  & $90.8\%(0.7\%)$ & $2.35(0.26)$ & $94.69\%(0.4\%)$ & $51.02(10.50)$ & $90.5\%(1.4\%)$ & $444.93(203.28)$\\
CF-RNN  & $92.0\%(0.3\%)$ & $3.01(0.17)$ & $96.5\%(0.4\%)$ & $61.86(18.02)$&  $89.7\%(2.4\%)$ & $733.95(582.52)$\\
MQ-RNN  & $85.3\%(0.2\%)$ & $2.64(0.11)$ & $48.0\%(1.8\%)$ & $21.39(2.36)$ &$15.0\%(2.6\%)$ & $136.56(63.32)$\\
DP-RNN  & $1.2\%(0.3\%)$  & $0.12(0.01)$ & $3.3\%(0.3\%)$ & $7.39(0.74)$ & $0.0\%(0.0\%)$ & $61.18(32.37)$\\
\bottomrule
\end{tabular}
\end{adjustbox}
\end{center}
\end{table}




\subsection{Autoregressive Order Selection} \label{ARorderselection}

In this section, we evaluate the performance of our method in selecting the autoregressive order for two synthetic autoregressive processes. The first is the non-linear autoregressive (NLAR) process \cite{taieb2015bias, medeiros2006building, kock2011forecasting}:

\begin{equation*}
\small
\begin{split}
    y_i  =-0.17 + 0.85y_{i-1} + 0.14y_{i-2} -0.31y_{i-3}
    + 0.08y_{i-7} + 12.80G_1(y_{i-1:i-7}) 
    + 2.44G_2(y_{i-1:i-7}) + \eta_i,
\end{split}
\end{equation*}

where $\eta_i \sim N(0,1)$ represents i.i.d. Gaussian random noises, and the functions \(G_1\) and \(G_2\) are defined as:

\begin{equation*}
\small 
\begin{split}
    G_1(y_{i-1:i-7}) &= (1+\exp{-0.46(0.29y_{i-1} - 0.87y_{i-2} + 0.40y_{i-7} - 6.68)})^{-1},\\
    G_2(y_{i-1:i-7}) &= (1+\exp{-1.17\times10^{-3}(0.83y_{i-1} - 0.53y_{i-2} - 0.18y_{i-7} + 0.38)})^{-1}. 
\end{split}
\end{equation*}

The second is the exponential autoregressive process \cite{auestad1990identification}:

\begin{equation*}
y_i = \left(0.8-1.1\exp{-50y_{i-1}^2}\right)y_{i-1} + \eta_i
\end{equation*}

where, again, $\eta_i \sim N(0,1)$ denotes i.i.d. Gaussian random noises.

For both synthetic processes, we generate five datasets. Each dataset consists of training, validation, and test sequences. The training sequence has 10000 samples, while the validation and test sequences each contain 1000 samples. For training, we employ a single-layer RNN with a hidden layer width of 1000. Further details on the experimental setting can be found in Appendix \ref{appendix: model_selection}. 

For the NLAR process, we consider two different window sizes for RNN modeling: $15$ (with input as \(y_{i-1:i-15}\)) and $1$ (with input as \(y_{i-1}\)). Notably, the NLAR process has an order of $7$. In the case where the window size is $15$, the input information suffices for RNN modeling, rendering the past information conveyed by the hidden states redundant. However, when the window size is $1$, this past information becomes indispensable for the RNN model. In contrast, the exponential autoregressive process has an order of $1$. For all window sizes we explored, namely $1,3,5,7,10,15$, the input information is always sufficient for RNN modeling. 

We evaluate the predictive performance using mean square prediction error (MSPE) and mean square fitting error (MSFE). The model selection performance is assessed by two metrics: the false selection rate (FSR) and the negative selection rate (NSR). The FSR is given by 
\[ \text{FSR} = \frac{\sum_{j=1}^{5}|\hat{S}_j/S|}{\sum_{j=1}^{5}|\hat{S}_j|} \]
and the NSR by 
\[ \text{NSR} = \frac{\sum_{j=1}^{5}|S/\hat{S}_j|}{\sum_{i=j}^{5}|S|} \]
where \( S \) denotes the set of true variables, and \( \hat{S}_j \) represents the set of selected variables from dataset \( j \). Furthermore, we provide the final number of nonzero connections for hidden states and the estimated autoregressive orders. The numerical results for the NLAR process are presented in Table \ref{table: sim_results}, while the numerical results for the exponential autoregressive process can be found in Table \ref{table: sim_results_new}.

Our results are promising. Specifically, when the window size is equal to or exceeds the true autoregressive order, all connections associated with the hidden states are pruned, effectively converting the RNN into an MLP. Conversely, if the window size is smaller than the true autoregressive order, a significant number of connections from the hidden states are retained. Impressively, our method accurately identifies the autoregressive order—a noteworthy achievement considering the inherent dependencies in the time series data. Although our method produces a nonzero FSR for the NLAR process, it is quite reasonable considering the relatively short time sequence and the complexity of the functions $G_1$ and $G_2$.

\begin{table}[htbp]
\vspace{-0.1in}
\caption{Numerical results for the NLAR process: numbers in the parentheses are standard deviations of the respective means, "-" indicates not applicable, $\downarrow$ means lower is better, and ``\#hidden link'' denotes the number of nonzero connections from the hidden states. All results are obtained from five independent runs.}
\label{table: sim_results}
\begin{center}
\begin{adjustbox}{max width=\textwidth}
\begin{tabular}{c|ccccccc} 
\toprule
Model & Window size & FSR $\downarrow$ & NSR $\downarrow$ & AR order & \#hidden link & MSPE $\downarrow$ & MSFE $\downarrow$ \\
\midrule
PA-RNN & $1$ & 0 & 0 & - & $357(21)$ & $1.056(0.001)$ & $1.057(0.006)$\\
PA-RNN & $15$ & $0.23$ & $0$ & $7.4(0.25)$ & $0(0)$ & $1.017(0.008)$ & $1.020(0.010)$ \\
\bottomrule
\end{tabular}
\end{adjustbox}
\end{center}
\end{table}

\begin{table}[h]
\vspace{-0.1in}
\caption{Numerical results for the exponetial autoregressive process.}
\label{table: sim_results_new}
\begin{center}
\begin{adjustbox}{max width=\textwidth}
\begin{tabular}{c|ccccccc} 
\toprule
Model & Window size & FSR $\downarrow$ & NSR $\downarrow$ & AR order & \#hidden link & MSPE $\downarrow$ & MSFE $\downarrow$ \\
\midrule
PA-RNN & $1$ & 0 & 0 & $1(0)$ & $0(0)$ & $1.004(0.004)$ & $1.003(0.005)$\\

PA-RNN & $3$ & 0 & 0 & $1(0)$ & $0(0)$ & $1.006(0.005)$ & $0.999(0.004)$\\

PA-RNN & $5$ & 0 & 0 & $1(0)$ & $0(0)$ & $1.000(0.004)$ & $1.007(0.005)$\\

PA-RNN & $7$ & 0 & 0 & $1(0)$ & $0(0)$ & $1.006(0.005)$ & $1.000(0.003)$\\

PA-RNN & $10$ & 0 & 0 & $1(0)$ & $0(0)$ & $1.002(0.004)$ & $1.002(0.006)$\\

PA-RNN & $15$ & $0$ & $0$ & $1(0)$ & $0(0)$ & $1.001(0.004)$ & $1.002(0.007)$ \\
\bottomrule
\end{tabular}
\end{adjustbox}
\end{center}
\end{table}

\vspace{-2mm}
\subsection{RNN Model Compression} 
We have also applied our method to RNN model compression, achieving state-of-the-art results. Please refer to Section \ref{appendix: model_compression} in the Appendix for details. 

\section{Discussion} \label{main: conclusion}

This paper has established the theoretical groundwork for sparse deep learning with time series data, including posterior consistency, structure selection consistency, and asymptotic normality of predictions. Our empirical studies indicate that sparse deep learning can outperform current cutting-edge approaches, such as conformal predictions, in prediction uncertainty 
quantification. More specifically, compared to conformal methods, our method maintains the same coverage rate, if not better, while generating significantly shorter prediction intervals. Furthermore, our method effectively determines the autoregression order for time series data and surpasses state-of-the-art techniques in large-scale model compression. 

 In summary, this paper represents a significant advancement in statistical inference for deep RNNs, which, through sparsing, has successfully integrated the RNNs into the framework of statistical modeling.
 The superiority of our method over the conformal methods shows further the criticality of  
 consistently approximating the underlying mechanism of the data generation process in uncertainty quantification. 

The theory developed in this paper has included LSTM \cite{LSTM1997} as a special case, and some numerical examples have been conducted with LSTM; see Section \ref{appendix: numerical experiments} of the Appendix for the detail. Furthermore, there is room for refining the theoretical study under varying mixing assumptions for time series data, which could broaden applications of the proposed method. Also, the efficacy of the proposed method can potentially be further improved with the elicitation of different prior distributions.


\bibliography{reference}
\bibliographystyle{unsrt}

\newpage

\appendix

\begin{center}
{\bf \Large Appendix for ``Sparse Deep Learning for Time Series Data: Theory and Applications''}
\end{center} 

\section{Mathematical Facts of Sparse RNNs}

For a sparse RNN model with $H_n - 1$ recurrent layers and a single output layer, several mathematical facts can be established. Let's denote the number of nodes in each layer as $L_1, \ldots, L_{H_n}$. Additionally, let $r_{w_i}$ represent the number of non-zero connection weights for $\bw^{i}$, and let $r_{v_i}$ denote the number of non-zero connection weights for $\bv^{i}$, where $\bw_i$ and $\bv_i$ denote the connection weights at two layers 
of a sparse RNN. 

Furthermore, we define $O^{t}_{i,j}(\bbeta, \bx_{t:1})$ as the output value of the $j$-th neuron of the $i$-th layer at time $t$. This output depends on the parameter vector $\bbeta$ and the input sequence $\bx_{t:1} = (\bx_t, \ldots, \bx_1)$, where $\bx_t$ represents the generic input to the sparse RNN at time step $t$.


\begin{lemma}
\label{lemma: os}
Under Assumption \ref{assump3}, if a sparse RNN has at most $r_n$ non-zero connection weights (i.e., $r_n = \sum_{i=1}^{H_n}r_{w_i} + \sum_{i=1}^{H_n-1}r_{v_i}$) and $\norm{\bbeta}_{\infty} = E_n$, then the summation of the absolute outputs of the $i$th layer at time $t$ is bounded by
\begin{align*}
    &\sum_{j=1}^{L_i} |O^{t}_{i,j}(\bbeta, \bx_{t:1})| \leq t^{i}\left(\prod_{k=1}^{i}r_{w_k}\right)\left(\prod_{k=1}^{i}r_{v_k}\right)^{t-1}(E_n)^{ti}, \hspace{3mm} 1 \leq  i \leq H_n-1\\
    &\sum_{j=1}^{L_{H_n}} |O^{t}_{H_n,j}(\bbeta, \bx_{t:1})| \leq t^{H_n}\left(\prod_{k=1}^{H_n}r_{w_k}\right)\left(\prod_{k=1}^{H_n-1}r_{v_k}\right)^{t-1}\left(E_n\right)^{t(H_n-1)+1}
\end{align*}
\end{lemma}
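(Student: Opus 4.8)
The plan is to prove both estimates by a double induction, over the layer index $i$ and the time step $t$, powered by a single affine recursion for the layerwise $\ell_1$-norm of the activations. Writing $S_i^t = \sum_{j=1}^{L_i}|O^t_{i,j}(\bbeta,\bx_{t:1})|$, the engine of the argument is the inequality
\begin{equation*}
S_i^t \le E_n r_{w_i}\, S_{i-1}^t + E_n r_{v_i}\, S_i^{t-1}, \qquad 2 \le i \le H_n-1,
\end{equation*}
which I would derive directly from the update $\bz_t^i = \psi^i[\bw^i \bz_t^{i-1} + \bv^i \bz_{t-1}^i]$. First apply the triangle inequality together with the Lipschitz-$1$ property of $\psi^i$ from Assumption \ref{assump3} (so $|\psi^i(a)| \le |a|$ on the origin-passing case, the bounded case being only easier) to pass the activation; then bound each matrix--vector product by pulling out the weight bound $\norm{\bbeta}_\infty = E_n$ and counting nonzero entries, using that $\sum_{j,k:\,w^i_{jk}\neq 0}|z^{i-1}_{t,k}| \le r_{w_i}\,S_{i-1}^t$ since each of the $r_{w_i}$ nonzero pairs contributes at most the full sum $S_{i-1}^t$, and likewise for $\bv^i$. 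This turns the nonlinear recurrence into the clean affine form above.

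First I would settle the base layer $i=1$ separately, because the recursion breaks there: its ``input layer'' is the genuine input $\bx_t$, whose norm is \emph{not} bounded. Here the boundedness of $\psi^1$ in Assumption \ref{assump3} is exactly what rescues the argument, controlling the layer-$1$ output magnitude independently of $\norm{\bx_t}$; combined with $\bz_0^1 = \bZeros$ and the nonzero-weight count this yields the base bound $S_1^t \le t\, r_{w_1} r_{v_1}^{\,t-1} E_n^t$ by induction on $t$ alone. This is precisely the role anticipated in Remark \ref{remark: assump3}, where bounded first-layer activation replaces the usual bounded-input hypothesis.

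Next I would run the double induction for $2 \le i \le H_n-1$: assuming the target bound for $S_{i-1}^t$ (outer induction on the layer) and for $S_i^{t-1}$ (inner induction on time), I substitute into the affine recursion and verify that the two resulting terms \emph{together} stay below $t^i(\prod_{k=1}^i r_{w_k})(\prod_{k=1}^i r_{v_k})^{t-1}E_n^{ti}$. This reduces to checking that the two ratios $\frac{1}{t\, r_{v_i}^{t-1} E_n^{t-1}}$ and $\frac{(1-1/t)^i}{\prod_{k=1}^{i-1} r_{v_k}\, E_n^{i-1}}$ sum to at most $1$, which holds with equality at $t=1$ and strictly for $t\ge 2$, using $E_n \ge 1$ and $r_{w_k},r_{v_k}\ge 1$ together with $(1-1/t)^i \le 1-1/t$. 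Finally, for the output layer $i=H_n$ the map $\bw^{H_n}\psi^{H_n-1}[\cdots]$ carries no recurrent matrix and no top activation, so the recursion degenerates to $S_{H_n}^t \le E_n r_{w_{H_n}} S_{H_n-1}^t$; inserting the layer-$(H_n-1)$ bound yields the second displayed estimate, with the $E_n$-exponent dropping to $t(H_n-1)+1$ and no additional factor of $\prod r_v$ introduced, so the stated power $t^{H_n}$ is a valid (if loose) bound on the sharper $t^{H_n-1}$ one obtains.

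The main obstacle I anticipate is the coupled, two-directional bookkeeping: the layer recursion advances $i$ at fixed $t$ while the temporal recursion advances $t$ at fixed $i$, so the induction must be organized so that both hypotheses are simultaneously available (for instance, nesting a $t$-induction inside the $i$-induction, or inducting on $i+t$, since both dependencies strictly decrease $i+t$). The delicate quantitative point is that the two additive contributions must \emph{jointly} respect the target rather than merely each being bounded by it; this is what pins down the exact powers of $t$, of $E_n$, and the partial products $\prod r_{v_k}$ in the statement, and forces the tight accounting at $t=1$. Treating the first-layer base case through the boundedness of $\psi^1$, rather than its Lipschitzness, is the other place demanding care.
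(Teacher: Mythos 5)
Your proposal is correct and follows essentially the same route as the paper's proof: the same affine recursions in $i$ and $t$ (with the boundedness of $\psi^1$ handling the unbounded input at the first layer and the output layer treated as a degenerate, non-recurrent step), verified by substituting the claimed bound back into the recursions, where your two-ratio check is exactly the paper's inequality $\tfrac{1}{t}+\bigl(\tfrac{t-1}{t}\bigr)^{i}\le 1$ after absorbing the slack in the $E_n$ and $r_{v_k}$ exponents. No gaps.
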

\begin{proof}
For simplicity, we rewrite $O^{t}_{i,j}(\bbeta, \bx_{t:1})$ as $O^{t}_{i,j}$ when appropriate. The lemma is the result from the following facts:
\begin{itemize}
    \item For $t=1$ and $1 \leq i \leq H_n$ (Lemma S4 from \cite{sun2021consistent})
    $$\sum_{j=1}^{L_i}|O^{1}_{i,j}| \leq E_n^{i}\prod_{k=1}^{i}r_{w_k}$$
    
     \item For $t \geq 2$ and $i=1$ (recursive relationship)
    $$\sum_{j=1}^{L_1}|O^{t}_{1,j}| \leq E_nr_{w_1} + E_nr_{v_1} \sum_{j=1}^{L_1}|O^{t-1}_{1,j}|$$
    
    \item For $t \geq 2$ and $2 \leq i \leq H_n-1$ (recursive relationship)
    $$\sum_{j=1}^{L_i}|O^{t}_{i,j}| \leq E_nr_{w_i} \sum_{j=1}^{L_{i-1}}|O^{t}_{i-1,j}| + E_nr_{v_i} \sum_{j=1}^{L_i}|O^{t-1}_{i,j}|$$
    
    \item For $t \geq 2$ and $i = H_n$ (recursive relationship)
    $$\sum_{j=1}^{L_{H_n}}|O^{t}_{H_n,j}| \leq E_nr_{w_{H_n}} \sum_{j=1}^{L_{H_n-1}}|O^{t}_{H_n-1,j}|$$
\end{itemize}

We can verify this lemma by plugging the conclusion into all recursive relationships. We show the steps to verify the case when $t \geq 2$ and $2 \leq i \leq H_n-1$, since other cases are trivial to verify.
\begin{align*}
    &\sum_{j=1}^{L_i}|O^{t}_{i,j}|
    \leq E_nr_{w_i} \sum_{j=1}^{L_{i-1}}|O^{t}_{i-1,j}| + E_nr_{v_i} \sum_{j=1}^{L_i}|O^{t-1}_{i,j}|\\
    &\leq t^{i-1}\left(\prod_{k=1}^{i}r_{w_k}\right)\left(\prod_{k=1}^{i-1}r_{v_k}\right)^{t-1}(E_n)^{t(i-1)+1} +  (t-1)^{i}\left(\prod_{k=1}^{i}r_{w_k}\right)\left(\prod_{k=1}^{i}r_{v_k}\right)^{t-2}r_{v_i}(E_n)^{(t-1)i+1}\\
    &\leq t^{i-1}\left(\prod_{k=1}^{i}r_{w_k}\right)\left(\prod_{k=1}^{i}r_{v_k}\right)^{t-1}(E_n)^{ti} +  (t-1)^{i}\left(\prod_{k=1}^{i}r_{w_k}\right)\left(\prod_{k=1}^{i}r_{v_k}\right)^{t-1}(E_n)^{ti}\\
    &\leq t^{i}\left(\prod_{k=1}^{i}r_{w_k}\right)\left(\prod_{k=1}^{i}r_{v_k}\right)^{t-1}(E_n)^{ti},
\end{align*}
where the last inequality is due to the fact that
\begin{align*}
    &\dfrac{t^{i-1}+(t-1)^{i}}{t^i} = \dfrac{1}{t} + \left(\dfrac{t-1}{t}\right)^{i} \leq \dfrac{1}{t} + \dfrac{t-1}{t} = 1
\end{align*}

\end{proof}



\begin{lemma}
\label{lemma: odiff}
Under Assumption \ref{assump3}, consider two RNNs, $\bbeta$ and $\widetilde{\bbeta}$, where the former one is a sparse network satisfying $\norm{\bbeta}_0 = r_n$ and $\norm{\bbeta}_{\infty} = E_n$, and it's network structure vector is denoted by $\bgamma$. Assume that if $|\bbeta_i - \widetilde{\bbeta}_i| < \delta_1$ for all $i \in \bgamma$ and $|\bbeta_i - \widetilde{\bbeta}_i| < \delta_2$ for all $i \notin \bgamma$, then
\begin{align*}
&\max_{\bx_{t:1}}|\mu(\bx_{t:1}, \bbeta)-\mu(\bx_{t:1}, \widetilde{\bbeta})|\\
&\leq t^{H_n}\delta_1H_n\left(\prod_{k=1}^{H_n}r_{w_k}\right)(E_n + \delta_1)^{t(H_n + 1) - 2 }\left(\prod_{k=1}^{H_n-1}r_{v_k}\right)^{t-1}\\ 
&+ t^{H_n}\delta_2(p_nL_1 + \sum_{k=1}^{H_n}L_k)\left(\prod_{k=1}^{H_n}[\delta_2L_k + r_{w_k}(E_n + \delta_1)]\right)\left(\prod_{k=1}^{H_n-1}[\delta_2L_k + r_{v_k}(E_n + \delta_1)]\right)^{t-1}.
\end{align*}
\end{lemma}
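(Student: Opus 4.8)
The plan is to control the layer-wise aggregate output discrepancy
$\Delta_i^t := \sum_{j=1}^{L_i}\bigl|O^t_{i,j}(\bbeta,\bx_{t:1}) - O^t_{i,j}(\widetilde{\bbeta},\bx_{t:1})\bigr|$
by setting up a coupled recursion in the depth index $i$ and the time index $t$, exactly paralleling the recursive scheme used to prove Lemma \ref{lemma: os}. The object of interest, $\max_{\bx_{t:1}}|\mu(\bx_{t:1},\bbeta)-\mu(\bx_{t:1},\widetilde{\bbeta})|$, is $\Delta_{H_n}^t$, so it suffices to solve this recursion and extract its closed form.

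First I would pass the difference through the activation. Writing the pre-activation of neuron $(i,j)$ at time $t$ as a weighted sum over the previous layer (via $\bw^i$) and over the previous time step of the same layer (via $\bv^i$), Assumption \ref{assump3} (the $1$-Lipschitz property for $2\le i\le H_n$, and boundedness for $i=1$) lets me bound $|O^t_{i,j}(\bbeta)-O^t_{i,j}(\widetilde{\bbeta})|$ by the absolute difference of the two pre-activations. I then apply the add-and-subtract decomposition to each summand, splitting $\bw^i_{jk}O^t_{i-1,k}(\bbeta)-\widetilde{\bw}^i_{jk}O^t_{i-1,k}(\widetilde{\bbeta})$ into a propagation term $\bw^i_{jk}\bigl[O^t_{i-1,k}(\bbeta)-O^t_{i-1,k}(\widetilde{\bbeta})\bigr]$ and a perturbation term $(\bw^i_{jk}-\widetilde{\bw}^i_{jk})\,O^t_{i-1,k}(\widetilde{\bbeta})$, and likewise for the recurrent weights. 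Summing over $j$, the propagation terms feed $\Delta_{i-1}^t$ and $\Delta_i^{t-1}$ back into the recursion (with multiplicative factor at most $E_n$ per active weight), while the perturbation terms act as source terms.

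The source terms are where the two perturbation scales separate. A perturbation on an active connection (index in $\bgamma$) is smaller than $\delta_1$, and there are $r_{w_i}$ (respectively $r_{v_i}$) such connections; a perturbation on an inactive connection is smaller than $\delta_2$, and since $\bbeta$ vanishes there, $\widetilde{\bbeta}$ may be dense with up to $L_iL_{i-1}$ (respectively $L_i^2$) such connections. Each source term is multiplied by an output magnitude of the perturbed network $\widetilde{\bbeta}$, which I would bound by invoking Lemma \ref{lemma: os} applied to $\widetilde{\bbeta}$: its active weights obey $|\widetilde{\bbeta}_i|\le E_n+\delta_1$ while its inactive weights obey $|\widetilde{\bbeta}_i|<\delta_2$, so the effective per-layer weight mass becomes $\delta_2 L_k + r_{w_k}(E_n+\delta_1)$ for the input weights and $\delta_2 L_k + r_{v_k}(E_n+\delta_1)$ for the recurrent weights. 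These are exactly the factors appearing in the second ($\delta_2$) term of the claimed bound, with the total count of perturbable inactive positions assembling into the prefactor $p_nL_1 + \sum_{k=1}^{H_n}L_k$; the first ($\delta_1$) term instead retains only the sparse mass $r_{w_k}(E_n+\delta_1)$ together with the count $r_{w_k}$.

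Finally I would unroll the two-dimensional recursion. Because the recurrent weights $\bv$ reinject the previous time step, each traversal over $t$ steps contributes the factor $\bigl(\prod_k r_{v_k}\bigr)^{t-1}$ (respectively $\bigl(\prod_k[\delta_2 L_k + r_{v_k}(E_n+\delta_1)]\bigr)^{t-1}$), while summing a single weight perturbation over all depth-time positions it can influence produces the polynomial prefactor $t^{H_n}$ and the factor $H_n$ multiplying $\delta_1$. Collecting the $\delta_1$- and $\delta_2$-contributions separately then yields the two stated terms. The main obstacle is the bookkeeping of this coupled depth-time recursion: one must track the growth of the perturbed network's outputs (hence the repeated appeal to Lemma \ref{lemma: os} with weight bound $E_n+\delta_1$) while keeping the two perturbation scales, and their different counting factors $r_{w_k}$ versus $L_k$, cleanly separated. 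As in Lemma \ref{lemma: os}, the cleanest route is to guess the closed form and verify that it satisfies every base case and recursion, rather than summing the resulting series directly.
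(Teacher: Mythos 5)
Your overall machinery is the right one — a depth--time recursion for the aggregate layer-wise discrepancy, an add-and-subtract split into propagation and perturbation terms, separate counting of active ($r_{w_k},r_{v_k}$, scale $\delta_1$) versus inactive ($L_kL_{k-1}$, scale $\delta_2$) positions, appeals to Lemma \ref{lemma: os} for the output magnitudes appearing in the source terms, and a guess-and-verify resolution of the recursion. All of that matches the paper's proof. However, there is one structural device you are missing, and without it your single recursion cannot produce the bound in the stated form. The paper introduces an intermediate \emph{sparse} network $\check{\bbeta}$, defined by $\check{\bbeta}_i=\widetilde{\bbeta}_i$ for $i\in\bgamma$ and $\check{\bbeta}_i=0$ for $i\notin\bgamma$, and applies the triangle inequality: the comparison $\bbeta$ versus $\check{\bbeta}$ is a sparse-versus-sparse perturbation of size $\delta_1$ and yields the first term (involving only $\prod_k r_{w_k}$, $\prod_k r_{v_k}^{t-1}$ and powers of $E_n+\delta_1$), while the comparison $\check{\bbeta}$ versus $\widetilde{\bbeta}$ is a sparse-versus-dense perturbation of size $\delta_2$ and yields the second term (with the inflated per-layer masses $\delta_2L_k+r_{w_k}(E_n+\delta_1)$).

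In your one-shot recursion between $\bbeta$ and $\widetilde{\bbeta}$, the $\delta_1$-scale source term at layer $i$ is $r_{w_i}\delta_1\sum_k|O^t_{i-1,k}(\widetilde{\bbeta},\cdot)|$, and $\widetilde{\bbeta}$ is genuinely dense: its outputs can only be bounded (by your own generalization of Lemma \ref{lemma: os}) with the per-layer factors $\delta_2L_k+r_{w_k}(E_n+\delta_1)$ and $\delta_2L_k+r_{v_k}(E_n+\delta_1)$. If you instead place $O(\bbeta)$ in the source term, the homogeneous propagation coefficient becomes $|\widetilde{\bw}^i_{jk}|$, whose inactive entries contribute $\delta_2L_i$ to the propagation factor, so the $\delta_1$ contributions again get multiplied by $\prod_k[\delta_2L_k+r_{w_k}(E_n+\delta_1)]$ along the unrolled paths. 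Either way you prove a valid inequality, but one in which the first term is contaminated by the $\delta_2L_k$ factors — strictly weaker than the lemma as stated, whose $\delta_1$ term contains only the sparse counts. (For the downstream application the weaker bound would likely still suffice, since $\delta_2$ is later chosen so small that $\delta_2L_k$ is negligible, but as a proof of this lemma the two-step decomposition through $\check{\bbeta}$ is needed.)
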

\begin{proof}
Define $\check{\bbeta}$ such that $\check{\bbeta}_i = \widetilde{\bbeta}_i$ for all $i \in \bgamma$ and $\check{\bbeta}_i = 0$ for all $i \notin \bgamma$. Let $\check{O}^{t}_{i,j}$ denote $O^{t}_{i,j}(\check{\bbeta}, \bx_{t:1})$. Then, from the following facts that
\begin{itemize}
    \item For $t=1$ and $1 \leq  i \leq H_n$ (Lemma S4 from \cite{sun2021consistent})
    \begin{align*}
        \sum_{j=1}^{L_i}|\check{O}^{1}_{i,j} - O^{1}_{i,j}| \leq i\delta_1(E_n + \delta_1)^{i-1}\left(\prod_{k=1}^{i}r_{w_k}\right)
    \end{align*}
    
    \item For $t \geq 2$ and $i=1$ (recursive relationship)
    \begin{align*}
        \sum_{j=1}^{L_1}|\check{O}^{t}_{1,j} - O^{t}_{1,j}| &\leq r_{w_1}\delta_1 + r_{v_1}\delta_1\sum_{k=1}^{L_{1}}|O^{t-1}_{1, k}| + r_{v_1}(E_n + \delta_1)\sum_{k=1}^{L_{1}}|\check{O}^{t-1}_{1,k} - O^{t-1}_{1,k}|
    \end{align*}

    \item For $t \geq 2$ and $2 \leq i \leq H_n-1$ (recursive relationship)
    \begin{align*}
        \sum_{j=1}^{L_i}|\check{O}^{t}_{i,j} - O^{t}_{i,j}| &\leq r_{w_i}\delta_1\sum_{k=1}^{L_{i-1}}|O^{t}_{i-1, k}| + r_{w_i}(E_n + \delta_1)\sum_{k=1}^{L_{i-1}}|\check{O}^{t}_{i-1,k} - O^{t}_{i-1,k}|\\
        &+r_{v_i}\delta_1\sum_{k=1}^{L_{i}}|O^{t-1}_{i, k}| + r_{v_i}(E_n + \delta_1)\sum_{k=1}^{L_{i}}|\check{O}^{t-1}_{i,k} - O^{t-1}_{i,k}|
    \end{align*}
    
    \item For $t \geq 2$ and $i = H_n$ (recursive relationship)
    \begin{align*}
         \sum_{j=1}^{L_{H_n}}|\check{O}^{t}_{H_n,j} - O^{t}_{H_n,j}| &\leq r_{w_{H_n}}\delta_1\sum_{k=1}^{L_{H_n-1}}|O^{t}_{H_n-1, k}| + r_{w_{H_n}}(E_n + \delta_1)\sum_{k=1}^{L_{H_n-1}}|\check{O}^{t}_{H_n-1,k} - O^{t}_{H_n-1,k}|.
    \end{align*}
\end{itemize}
We have
\begin{itemize}
    \item For $1 \leq i \leq H_n-1$
    \begin{align*}
         \sum_{j=1}^{L_i}|\check{O}^{t}_{i,j} - O^{t}_{i,j}| \leq t^{i+1}i\delta_1\left(\prod_{k=1}^{i}r_{w_k}\right)\left(\prod_{k=1}^{i}r_{v_k}\right)^{t-1}(E_n+\delta_1)^{(i+1)t-2}
    \end{align*}
    
    \item For $i = H_n$
    \begin{equation*}
     \begin{split}
         |\mu(\bx_{t:1}, \bbeta) - |\mu(\bx_{t:1}, \check{\bbeta})| &= \sum_{j=1}^{L_{H_n}}|\check{O}^{t}_{H_n,j} - O^{t}_{H_n,j}| \\
          & \leq t^{H_n}H_n\delta_1\left(\prod_{k=1}^{H_n}r_{w_k}\right)\left(\prod_{k=1}^{H_n-1}r_{v_k}\right)^{t-1}(E_n+\delta_1)^{(H_n+1)t-2}.
    \end{split}
    \end{equation*}
\end{itemize}
We can verify the above conclusion by plugging it into all recursive relationships. We show the steps to verify the case when $t \geq 2$ and $2 \leq i \leq H_n-1$, since other cases are trivial to verify.
\begin{align*}
    &\sum_{j=1}^{L_i}|\check{O}^{t}_{i,j} - O^{t}_{i,j}|\\
    &\leq r_{w_i}\delta_1\sum_{k=1}^{L_{i-1}}|O^{t}_{i-1, k}| + r_{w_i}(E_n + \delta_1)\sum_{k=1}^{L_{i-1}}|\check{O}^{t}_{i-1,k} - O^{t}_{i-1,k}|+r_{v_i}\delta_1\sum_{k=1}^{L_{i}}|O^{t-1}_{i, k}| + r_{v_i}(E_n + \delta_1)\sum_{k=1}^{L_{i}}|\check{O}^{t-1}_{i,k} - O^{t-1}_{i,k}|\\
    &\leq \delta_1t^{i-1}\left(\prod_{k=1}^{i}r_{w_k}\right)\left(\prod_{k=1}^{i-1}r_{v_k}\right)^{t-1}(E_n+\delta_1)^{ti-t} + \delta_1t^{i}(i-1)\left(\prod_{k=1}^{i}r_{w_k}\right)\left(\prod_{k=1}^{i-1}r_{v_k}\right)^{t-1}(E_n+\delta_1)^{ti-1}\\
    &+\delta_1(t-1)^{i}\left(\prod_{k=1}^{i}r_{w_k}\right)\left(\prod_{k=1}^{i}r_{v_k}\right)^{t-2}r_{v_i}(E_n+\delta_1)^{ti-i} + \delta_1(t-1)^{i+1}i\left(\prod_{k=1}^{i}r_{w_k}\right)\left(\prod_{k=1}^{i}r_{v_k}\right)^{t-2}r_{v_i}(E_n+\delta_1)^{ti-i+t-2}\\
    &\leq \delta_1t^{i-1}\left(\prod_{k=1}^{i}r_{w_k}\right)\left(\prod_{k=1}^{i}r_{v_k}\right)^{t-1}(E_n+\delta_1)^{(i+1)t-2} + \delta_1t^{i}(i-1)\left(\prod_{k=1}^{i}r_{w_k}\right)\left(\prod_{k=1}^{i}r_{v_k}\right)^{t-1}(E_n+\delta_1)^{(i+1)t-2}\\
    &+\delta_1(t-1)^{i}\left(\prod_{k=1}^{i}r_{w_k}\right)\left(\prod_{k=1}^{i}r_{v_k}\right)^{t-1}(E_n+\delta_1)^{(i+1)t-2} + \delta_1(t-1)^{i+1}i\left(\prod_{k=1}^{i}r_{w_k}\right)\left(\prod_{k=1}^{i}r_{v_k}\right)^{t-1}(E_n+\delta_1)^{(i+1)t-2}\\
    &\leq t^{i+1}i\delta_1\left(\prod_{k=1}^{i}r_{w_k}\right)\left(\prod_{k=1}^{i}r_{v_k}\right)^{t-1}(E_n+\delta_1)^{(i+1)t-2}
\end{align*}

where the last inequality is due to the fact that for $t \geq 2$ and $i \geq 2$, it is easy to see that

\begin{align*}
   & \dfrac{t^{i-1}+(i-1)t^{i}+(t-1)^{i}+i(t-1)^{i+1}}{it^{i+1}}\\
   &=\dfrac{1}{it^2} + \dfrac{i-1}{i}\dfrac{1}{t} + \dfrac{1}{it}\left(\dfrac{t-1}{t}\right)^{i} + \left(\dfrac{t-1}{t}\right)^{i+1}\\
   &\leq \dfrac{1}{2t^2} + \dfrac{1}{t} + \dfrac{(t-1)^2}{2t^3} + \left(\dfrac{t-1}{t}\right)^{3}\\
   & = \dfrac{t+2t^2+t^2-2t+1+2(t-1)^3}{2t^3}\\
   & = 1 + \dfrac{-3t^2+5t-1}{2t^3}\\
   & \leq 1
\end{align*}

Now let $\widetilde{O}^{t}_{i,j}$ denotes $O^{t}_{i,j}(\widetilde{\bbeta}, \bx_{1:t})$, then from the facts that

\begin{itemize}
    \item For $1 \leq t \leq T$ and $1 \leq  i \leq H_n$
    \begin{align*}
    &\sum_{j=1}^{L_i} |\check{O}^{t}_{i,j}| \leq t^{i}\left(\prod_{k=1}^{i}r_{w_k}\right)\left(\prod_{k=1}^{i}r_{v_k}\right)^{t-1}(E_n+\delta_1)^{ti}, \hspace{3mm} 1 \leq  i \leq H_n-1, \hspace{2mm}  1 \leq  t \leq T\\
    &\sum_{j=1}^{L_{H_n}} |\check{O}^{t}_{H_n,j}| \leq t^{H_n}\left(\prod_{k=1}^{H_n}r_{w_k}\right)\left(\prod_{k=1}^{H_n-1}r_{v_k}\right)^{t-1}\left(E_n+\delta_1\right)^{t(H_n-1)+1}, \hspace{3mm} 1 \leq  t \leq T
    \end{align*}
    This can be easily derived by Lemma \ref{lemma: os} and the fact that $\norm{\check{\bbeta}}_{\infty} \leq E_n + \delta_1$. 

    \item For $t=1$ and $1 \leq  i \leq H_n$ (Lemma S4 from \cite{sun2021consistent})
    \begin{align*}
        \sum_{j=1}^{L_i}|\check{O}^{1}_{i,j} - \widetilde{O}^{1}_{i,j}| \leq \delta_2(p_nL_1 + \sum_{k=1}^{i} L_k)\left(\prod_{k=1}^{i}[\delta_2L_k + r_{w_k}(E_n + \delta_1)]\right)
    \end{align*}
    
     \item For $t \geq 2$ and $i = 1$ (recursive relationship)
    \begin{align*}
        \sum_{j=1}^{L_1}|\check{O}^{t}_{1,j} - \widetilde{O}^{t}_{1,j}| &\leq \delta_2L_1p_n +\delta_2L_1\sum_{k=1}^{L_{1}}|\check{O}^{t-1}_{1,k} - \widetilde{O}^{t-1}_{1,k}| + r_{v_1}(E_n + \delta_1)\sum_{k=1}^{L_{1}}|\check{O}^{t-1}_{1,k} - \widetilde{O}^{t-1}_{1,k}| + \delta_2L_1\sum_{k=1}^{L_1}|\check{O}^{t-1}_{1, k}|
    \end{align*}

    \item For $t \geq 2$ and $2 \leq i \leq H_n-1$ (recursive relationship)
    \begin{align*}
        \sum_{j=1}^{L_i}|\check{O}^{t}_{i,j} - \widetilde{O}^{t}_{i,j}| &\leq \delta_2L_i\sum_{k=1}^{L_{i-1}}|\check{O}^{t}_{i-1,k} - \widetilde{O}^{t}_{i-1,k}| + r_{w_i}(E_n + \delta_1)\sum_{k=1}^{L_{i-1}}|\check{O}^{t}_{i-1,k} - \widetilde{O}^{t}_{i-1,k}| + \delta_2L_i\sum_{k=1}^{L_{i-1}}|\check{O}^{t}_{i-1, k}|\\
        &+\delta_2L_i\sum_{k=1}^{L_{i}}|\check{O}^{t-1}_{i,k} - \widetilde{O}^{t-1}_{i,k}| + r_{v_i}(E_n + \delta_1)\sum_{k=1}^{L_{i}}|\check{O}^{t-1}_{i,k} - \widetilde{O}^{t-1}_{i,k}| + \delta_2L_i\sum_{k=1}^{L_i}|\check{O}^{t-1}_{i, k}|
    \end{align*}

    \item For $t \geq 2$ and $i = H_n$ (recursive relationship)
    \begin{align*}
         \sum_{j=1}^{L_{H_n}}|\check{O}^{t}_{H_n,j} - \widetilde{O}^{t}_{H_n,j}| &\leq \delta_2L_{H_n}\sum_{k=1}^{L_{H_n-1}}|\check{O}^{t}_{H_n-1,k} - \widetilde{O}^{t}_{H_n-1,k}| + r_{w_{H_n}}(E_n + \delta_1)\sum_{k=1}^{L_{H_n-1}}|\check{O}^{t}_{H_n-1,k} - \widetilde{O}^{t}_{H_n-1,k}|\\
         &+  \delta_2L_{H_n}\sum_{k=1}^{L_{H_n-1}}|\check{O}^{t}_{H_n-1, k}|
    \end{align*}
\end{itemize}

we have

\begin{itemize}
    \item For $1 \leq i \leq H_n-1$
    \begin{align*}
         \sum_{j=1}^{L_i}|\check{O}^{t}_{i,j} - \widetilde{O}^{t}_{i,j}| \leq t^{i+1}\delta_2(p_nL_1 + \sum_{k=1}^{i} L_k)\left(\prod_{k=1}^{i}[\delta_2L_k + r_{w_k}(E_n + \delta_1)]\right)\left(\prod_{k=1}^{i}[\delta_2L_k + r_{v_k}(E_n + \delta_1)]\right)^{t-1}
    \end{align*}
    
    \item For $i = H_n$
    \begin{align*}
         &|\mu(\bx_{t:1}, \check{\bbeta}) - \mu(\bx_{t:1}, \widetilde{\bbeta})|\\
         & = \sum_{j=1}^{L_{H_n}}|\check{O}^{t}_{H_n,j} - O^{t}_{H_n,j}|\\
        & \leq  t^{H_n}\delta_2(p_nL_1 + \sum_{k=1}^{H_n} L_k)\left(\prod_{k=1}^{H_n}[\delta_2L_k + r_{w_k}(E_n + \delta_1)]\right)\left(\prod_{k=1}^{H_n-1}[\delta_2L_k + r_{v_k}(E_n + \delta_1)]\right)^{t-1}
    \end{align*}
\end{itemize}

We can verify the above conclusion in a similar approach.

The proof is completed by summation of the bound for $|\mu(\bx_{t:1}, \bbeta) - |\mu(\bx_{t:1}, \check{\bbeta})|$ and  $|\mu(\bx_{t:1}, \check{\bbeta}) - \mu(\bx_{t:1}, \widetilde{\bbeta})|$.
\end{proof}

\section{Proofs on Posterior Consistency: A Single Time Series}
\label{appendix: posterior_single_ts}



To establish  posterior consistency for DNNs with i.i.d. data,  \cite{sun2021consistent} utilized Proposition 1 from \cite{jiang2007bayesian}. This lemma provides three sufficient conditions for proving posterior consistency for general statistical models with i.i.d. data, along with a posterior contraction rate. In this paper, we aim to establish posterior consistency for DNNs with stochastic processes, specifically time series, that are strictly stationary and $\alpha$-mixing with an exponentially decaying mixing coefficient \cite{ghosal2017fundamentals, ghosal2007convergence}. 

Consider a time series $D_n = (z_1, z_2, \dots, z_n)$ defined on a probability space $(\Omega, \mathcal{F}, P^{*})$, which satisfies the assumptions outlined in Assumption 1. For simplicity, we assume that the initial values $z_{1-l}, \dots, z_0$ are fixed and given.


Let $\BP_n$ denote a set of probability densities,
let $\BP_n^{c}$ denote the complement of $\BP_n$, and let $\epsilon_n$ denote a sequence of positive numbers. Let $N(\epsilon_n, \BP_n)$ be the minimum number of Hellinger balls of radius $\epsilon_n$ that are needed to cover $\BP_n$, i.e., $N(\epsilon_n, \BP_n)$ is the minimum of all number $k$'s such that there exist sets $S_j = \{p : d(p,p_j) \leq \epsilon_n\}, j = 1,...,k$, with $P_n \subset \cup_{j=1}^{k}S_j$ holding, where 
\begin{align*}
    d(p, q) &= \sqrt{\int\int(\sqrt{p(z|\bx)} - \sqrt{q(z|\bx)})^2 v(\bx) dzd\bx},
\end{align*}
denotes the integrated Hellinger distance \cite{ghosal2007convergence, ghosal2017fundamentals} between the two conditional densities $p(z|\bx)$ and $q(z|\bx)$, where $\bx \in \BR^{l}$ contains the history up to $l$ time steps of $z$, and $v(\bx)$ is the probability density function of the marginal distribution of $\bx$. Note that $v$ is invariant with respect to time index $i$ due to the strictly stationary assumption.

For $D_n$, denote the corresponding true conditional density by $p^{*}$. Define $\pi(\cdot)$ as the prior density, and $\pi(\cdot|D_n)$ as the posterior. Define $\hat{\pi}(\epsilon) = \pi[d(p,p^{*}) > \epsilon|D_n]$ for each $\epsilon > 0$. Assume the conditions:
\begin{itemize}
    \item[(a)] $\log N(\epsilon_n, \BP_n) \leq n\epsilon_n^2$ for all sufficiently large $n$. 
    
    \item[(b)] $\pi(\BP^{c}_n) \leq e^{-bn\epsilon_n^2}$ for some $b > 0$ and all sufficiently large $n$.
    
    \item[(c)] Let $\norm{f}_s = (\int |f|^s dr)^{1/s}$, then $\pi(\norm{p-p^{*}}_s \leq b^{\prime}\epsilon_n) \geq e^{-\gamma n \epsilon_n^2}$ for some $b^{\prime}, \gamma > 0$, $s>2$, and all sufficiently large $n$.
\end{itemize}

\begin{lemma}
\label{lemma: pos_con_noniid}
Under the conditions (a), (b) and (c), given sufficiently large $n$, 
$\lim_{n \to \infty} \epsilon_n=0$,
and $\lim_{n \to \infty} n\epsilon_n^2 =\infty$, we have for some large $C > 0$,
\begin{align}
\label{posterior: convergence}
    \hat{\pi}(C\epsilon_n) = \pi(d(p, p^{*}) \geq C\epsilon_n | D_n) = O_{P^{*}}(e^{-n\epsilon_n^2}).
\end{align}
\end{lemma}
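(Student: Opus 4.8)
The plan is to adapt the classical sufficient-conditions argument for posterior consistency (as in \cite{jiang2007bayesian}, used by \cite{sun2021consistent} for i.i.d.\ data) to the $\alpha$-mixing setting, replacing every i.i.d.\ concentration and testing estimate by its dependent-data counterpart built on the integrated Hellinger distance $d$ taken under the stationary marginal $v$. Writing the likelihood-ratio process $R_n(p) = \prod_{i}\frac{p(z_i|\bx_i)}{p^{*}(z_i|\bx_i)}$, the posterior mass of the bad set $A = \{p : d(p,p^{*}) \geq C\epsilon_n\}$ is $\hat{\pi}(C\epsilon_n) = \frac{\int_A R_n(p)\,\pi(dp)}{\int R_n(p)\,\pi(dp)}$, and I would control numerator and denominator separately: showing the denominator is $\geq e^{-c_1 n\epsilon_n^2}$ with $P^{*}$-probability tending to $1$, and the numerator is $O_{P^{*}}(e^{-c_2 n\epsilon_n^2})$ with $c_2 > c_1 + 1$ when $C$ is chosen large.

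First I would bound the denominator below using condition (c). The prior charges an $L^{s}$-ball of radius $b'\epsilon_n$ around $p^{*}$ with mass at least $e^{-\gamma n\epsilon_n^2}$; with $s>2$, closeness in $L^{s}$ controls both the mean and the second moment of the per-step log ratio $\log\frac{p^{*}(z|\bx)}{p(z|\bx)}$, each of order $\epsilon_n^2$ on this neighborhood. The key dependent-data input is a law of large numbers / variance bound for the average $\frac1n\sum_i \log\frac{p^{*}(z_i|\bx_i)}{p(z_i|\bx_i)}$: because the chain is strictly stationary and $\alpha$-mixing with exponentially decaying coefficients, the covariances of the summands are summable, so the variance of the average is $O(1/n)$, and a Bernstein-type inequality for $\alpha$-mixing sequences gives exponential concentration. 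A Markov/Chebyshev argument then yields $\int R_n(p)\,\pi(dp) \geq e^{-\gamma n\epsilon_n^2}\, e^{-o(n\epsilon_n^2)}$ with probability tending to $1$, the required lower bound.

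Next I would construct the tests from condition (a). Cover $\BP_n$ by $N(\epsilon_n,\BP_n) \leq e^{n\epsilon_n^2}$ Hellinger balls and, for each center $p_j$, form a likelihood-ratio test of $p^{*}$ against $p_j$. The controlling quantity for both error types is the expected root likelihood ratio $E_{P^{*}}\sqrt{R_n(p)}$; iterating the tower property along the chain factorizes the conditional pieces into conditional Hellinger affinities $\int \sqrt{p(z|\bx)p^{*}(z|\bx)}\,dz$, whose outer average against the stationary, mixing law of $\bx$ I would argue is controlled by the integrated Hellinger distance, giving $E_{P^{*}}\sqrt{R_n(p)} \leq e^{-c\, n\, d(p,p^{*})^2}$ up to constants absorbing the dependence. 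Taking the maximum test over the cover with a union bound, condition (a) forces the Type~I error to $0$ while the Type~II error over $\{p\in\BP_n : d(p,p^{*})\geq C\epsilon_n\}$ is $\leq e^{-c_2 n\epsilon_n^2}$ for $C$ large. Splitting $A = (A\cap\BP_n)\cup(A\cap\BP_n^{c})$, the sieve part of the numerator is bounded (via the test and Markov's inequality) by this Type~II error, and the off-sieve part by $\pi(\BP_n^{c})\leq e^{-bn\epsilon_n^2}$ from condition (b). Dividing by the denominator and choosing $C$ so that $\min(c_2,b) > c_1+1$ gives the stated $O_{P^{*}}(e^{-n\epsilon_n^2})$ rate.

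The main obstacle lies entirely in handling the likelihood-ratio process under dependence rather than independence: both the concentration of the log-ratio average in the denominator step and the affinity factorization in the testing step implicitly rely on a product structure that no longer holds. The exponentially decaying $\alpha$-mixing coefficients are exactly what rescue these estimates — they make covariances summable for the variance bound and let blocking / Bernstein-type inequalities recover exponential error control — and I expect the delicate bookkeeping here, most naturally organized by invoking the non-i.i.d.\ posterior-contraction machinery of \cite{ghosal2007convergence} with $d$ as the governing semimetric, to be the technically hardest part to make fully rigorous.
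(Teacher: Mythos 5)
Your proposal is correct and follows essentially the same route as the paper: the paper's entire proof is a citation to the non-i.i.d.\ posterior-contraction machinery of Ghosal and van der Vaart (Theorems 8.19 and 8.29 and Section 9.5.3 of \cite{ghosal2017fundamentals}), and your numerator/denominator decomposition, prior-mass lower bound via condition (c) with an $\alpha$-mixing variance bound, entropy-based tests from condition (a), and sieve-complement bound from condition (b) are exactly the ingredients those theorems package up. You also correctly flag the one genuinely delicate point — that the Hellinger-affinity factorization of $E_{P^{*}}\sqrt{R_n(p)}$ does not reduce to the integrated distance $d$ by a naive tower-property argument under dependence — and defer it to the cited machinery, which is precisely what the paper does.
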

\begin{proof}
    This Lemma can be proved with similar arguments used in Section $9.5.3$, Theorem $8.19$, and Theorem $8.29$ of \cite{ghosal2017fundamentals}.
\end{proof}

\subsection{Posterior Consistency with a General Shrinkage Prior}

Let $\bbeta$ denote the vector of all connection weights of a RNN. To prove Theorem \ref{thm:pos_con_main}, we first consider a general shrinkage prior that all entries of $\bbeta$ are subject to an independent continuous prior $\pi_b$, i.e., $\pi(\bbeta) = \prod_{j=1}^{K_n}\pi_b(\beta_j)$, where $K_n$ denotes the total number of elements 
of $\bbeta$. Theorem \ref{pos_con_appendix} provides a sufficient condition for posterior consistency.

\begin{theorem}
\label{pos_con_appendix}
(Posterior consistency) Suppose Assumptions \ref{assump1} - \ref{assump3} hold. If the prior $\pi_b$ satisfies that
\begin{align}
    \label{s1}
    &\log(1/\underline{\pi}_{b}) = O(H_n\log{n} + \log{\bar{L}})\\
    \label{s2}
    &\pi_b\{[-\eta_n, \eta_n]\} \geq 1 - \dfrac{1}{K_n}\exp\{-\tau[H_n\log{n}+\log{\bar{L}+\log{p_n}}]\}\\
    \label{s3}
    &\pi_b\{[-\eta^{\prime}_n, \eta^{\prime}_n]\} \geq 1 - \dfrac{1}{K_n}\\
    \label{s4}
    &-\log[K_n\pi_b(|\beta_j| > M_n)] \succ n\epsilon_n^{2}
\end{align}
for some $\tau > 0$, where 
\begin{align*}
    &\eta_n < 1/[\sqrt{n}M_l^{H_n}K_n(n/H_n)^{2M_lH_n}(c_0M_n)^{2M_lH_n}],\\
    &\eta^{\prime}_n <1/[\sqrt{n}M_l^{H_n}K_n(r_n/H_n)^{2M_lH_n}(c_0E_n)^{2M_lH_n}],
\end{align*}
with some $c_0 > 1$, $\underline{\pi}_b$ is the minimal density value of $\pi_b$ within the interval $[-E_n - 1, E_n + 1]$, and $M_n$ is some sequence satisfying $\log(M_n) = O(\log(n))$. Then, there exists a sequence $\epsilon_n$, satisfying $n\epsilon_n^2 \asymp r_nH_n\log{n} + r_n\log{\bar{L}} + s_n\log{p_n} +n\varpi_n^2$ and $\epsilon_n \prec 1$, such that
\vspace{-0.05mm}
\begin{align*}
    \hat{\pi}(M\epsilon_n) = \pi(d(p, p^{*}) \geq M\epsilon_n | D_n) = O_{P^{*}}(e^{-n\epsilon_n^2})
\end{align*}

for some large $M > 0$.
\end{theorem}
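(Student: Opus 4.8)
The plan is to verify the three sufficient conditions (a), (b), (c) of Lemma \ref{lemma: pos_con_noniid} for a carefully chosen sieve $\BP_n$ and then invoke that lemma, in the integrated Hellinger distance, to obtain the stated contraction rate. The unifying observation I would exploit is that for the Gaussian autoregressive model of Assumption \ref{assump1} the conditional density is $p_{\bbeta}(y|\bx) \propto \exp(-(y - \mu(\bx,\bbeta))^2/2\sigma^2)$, so that both the integrated Hellinger distance $d(p_{\bbeta},p_{\bbeta'})$ and the $L_s$-distance $\norm{p_{\bbeta}-p_{\bbeta'}}_s$ are bounded, up to constants depending on $\sigma$, by the uniform output difference $\max_{\bx_{t:1}}\abs{\mu(\bx_{t:1},\bbeta)-\mu(\bx_{t:1},\bbeta')}$. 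This reduces all three verifications to controlling RNN output differences, for which Lemmas \ref{lemma: os} and \ref{lemma: odiff} are the essential quantitative inputs.

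First I would define the sieve $\BP_n$ as the set of RNN conditional densities whose weights satisfy $\norm{\bbeta}_\infty \leq M_n$ and have at most $\tilde{r}_n = O(r_n)$ coordinates exceeding a threshold, the remaining ``small'' coordinates being bounded by $\eta_n$. For condition (a) I would use Lemma \ref{lemma: odiff}: perturbing the large coordinates by $\delta_1$ and the small ones by $\delta_2$ changes $\mu$, and hence the Hellinger distance, by at most $\epsilon_n$; the covering number then factors as a combinatorial count $\binom{K_n}{\tilde{r}_n}$ of which coordinates are large, times the number of grid cells of width $\sim \delta_1$ in $[-M_n,M_n]^{\tilde{r}_n}$. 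Taking logarithms should yield $\log N(\epsilon_n,\BP_n) \lesssim \tilde{r}_n(H_n\log n + \log\bar{L} + \log p_n + \log(1/\delta_1))$, which must be shown to be $\lesssim n\epsilon_n^2$ under the choice $n\epsilon_n^2 \asymp r_nH_n\log n + r_n\log\bar{L} + s_n\log p_n + n\varpi_n^2$.

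For condition (b) the complement $\BP_n^c$ consists of weight vectors having some coordinate exceeding $M_n$ or more than $\tilde{r}_n$ coordinates exceeding the threshold. A union bound over the $K_n$ coordinates together with the tail condition (\ref{s4}) controls the first event, while a binomial/union-bound argument based on the small-ball condition (\ref{s3}) controls the second, giving $\pi(\BP_n^c) \leq e^{-bn\epsilon_n^2}$. For condition (c) I would center a small ball at the true weights $\bbeta^*$: applying Lemma \ref{lemma: odiff} with $\delta_1$-perturbations on $\bgamma^*$ and $\delta_2$-perturbations off $\bgamma^*$, and using the decomposition $\abs{\mu(\cdot,\bbeta)-\mu^*} \leq \abs{\mu(\cdot,\bbeta)-\mu(\cdot,\bbeta^*)} + \abs{\mu(\cdot,\bbeta^*)-\mu^*}$ whose last term is at most $\varpi_n$, the $L_s$-neighborhood $\{\norm{p_{\bbeta}-p^*}_s \leq b'\epsilon_n\}$ contains a product of intervals. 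Its prior mass is lower bounded by a factor $(\underline{\pi}_b \cdot 2\delta_1)^{r_n}$ on the support times the off-support mass from (\ref{s2}); conditions (\ref{s1})–(\ref{s3}) then guarantee the lower bound $e^{-\gamma n\epsilon_n^2}$.

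The hard part, and the genuine departure from the i.i.d. analysis of \cite{sun2021consistent}, is the recurrent dependence: since each hidden state feeds the next time step, output perturbations accumulate multiplicatively across the $M_l$ unrolled steps, producing the $M_l^{H_n}$ and $t^{H_n}$ prefactors and the $(\cdot)^{2M_lH_n}$ exponents that appear in the threshold sequences $\eta_n,\eta_n'$. The delicate point will be to confirm that, with $\eta_n,\eta_n'$ taken as small as in the statement, the induced $\log(1/\delta_1)$ and prior-mass penalties stay dominated by the target order $r_nH_n\log n + r_n\log\bar{L} + s_n\log p_n$, so that conditions (a)–(c) close \emph{simultaneously} rather than one at the expense of another; Lemma \ref{lemma: odiff} is precisely what makes the recurrent amplification tractable. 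Once (a)–(c) hold, Lemma \ref{lemma: pos_con_noniid} delivers the conclusion with the claimed $\epsilon_n$.
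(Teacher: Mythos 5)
Your proposal follows essentially the same route as the paper: it verifies conditions (a)--(c) of Lemma \ref{lemma: pos_con_noniid} by reducing Hellinger/$L_s$ distances to uniform RNN output differences via Lemma \ref{lemma: odiff}, with the same sieve construction, combinatorial entropy bound, binomial tail bound for the sieve complement, and small-ball prior-mass bound around $\bbeta^{*}$. The only calibration details to watch are that the paper takes the sieve sparsity to be $k_nr_n$ with $k_n$ (possibly growing, up to $n/r_n$) chosen so that the entropy is $\asymp n\epsilon_n^2$ --- not merely $O(r_n)$, which would make condition (b) fail when $n\varpi_n^2$ dominates the rate --- and that it is condition (\ref{s3}) (not (\ref{s2})) that supplies the off-support prior mass in step (c), with (\ref{s2}) reserved for bounding the complement probability in (b).
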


\begin{proof} To prove this theorem, it suffices to check all three conditions listed in Lemma \ref{lemma: pos_con_noniid}

\textbf{Checking condition (c):}
    
Consider the set $A = \{\bbeta : \max_{j \in \bgamma^{*}} |\beta_j - \beta^{*}_j| \leq \omega_n, \max_{j \notin \bgamma^{*}} |\beta_j - \beta^{*}_j| \leq \omega_n^{\prime}\}$, where 
\begin{align*}
    &\omega_n \leq \dfrac{c_1\epsilon_n}{M_l^{H_n}H_n(r_n/H_n)^{2M_lH_n}(c_0E_n)^{2M_lH_n}}\\
    &\omega_n^{\prime} \leq \dfrac{c_1\epsilon_n}{M_l^{H_n}K_n(r_n/H_n)^{2M_lH_n}(c_0E_n)^{2M_lH_n}}
\end{align*}
for some $c_1 > 0$ and $c_0 >  1$. If $\bbeta \in A$, by Lemma \ref{lemma: odiff}, we have
\begin{align*}
     |\mu(\bx_{M_l:1}, \bbeta) - \mu(\bx_{M_l:1}, \bbeta^{*})| \leq 3c_1\epsilon_n.
\end{align*}
By the definition (\ref{true_drnn}), we have
\begin{align*}
    |\mu(\bx_{M_l:1}, \bbeta) - \mu^{*}(\bx_{M_l:1})| \leq 3c_1\epsilon_n + \varpi_n
\end{align*}
Finally for some $s > 2$ (for simplicity, we take $s$ to be an even integer),
\begin{align*}
    \norm{p_{\bbeta} - p_{\mu^{*}}}_s &= \left(\int (|\mu(\bx_{M_l:1}, \bbeta) - \mu^{*}(\bx_{M_l:1})|)^s v(\bx_{M_l:1})d\bx_{M_l:1}\right)^{1/s}\\
    &\leq \left(\int (3c_1\epsilon_n + \varpi_n)^s v(\bx_{M_l:1})d\bx_{M_l:1}\right)^{1/s}\\
    &\leq 3c_1\epsilon_n + \varpi_n.
\end{align*}
For any small $b^{\prime} > 0$, condition (c) is satisfied as long as $c_1$ is sufficiently small, $\epsilon_n \geq C_0\varpi_n$ for some large $C_0$, and the prior satisfies $-\log{\pi(A)} \leq \gamma n\epsilon_n^2$. Since
\begin{align*}
    \pi(A) &\geq \left(\pi_b([E_n-\omega_n, E_n+\omega_n])\right)^{r_n}\times \pi(\{\max_{j \notin \bgamma^{*}} |\beta_j| \leq \omega_n^{\prime}\})\\
    &\geq (2\underline{\pi}_b\omega_n)^{r_n}\times \pi_b([-\omega_n^{\prime}, \omega_n^{\prime}])^{K_n-r_n}\\
    &\geq (2\underline{\pi}_b\omega_n)^{r_n}(1-1/K_n)^{K_n}
\end{align*}
where the last inequality is due to the fact that $\omega_n^{\prime} \gg \eta_n^{\prime}$. Note that $\lim_{n\rightarrow\infty}(1-1/K_n)^{K_n} = e^{-1}$. Since $\log(1/\omega_n) \asymp 2M_lH_n\log(E_n) + 2M_lH_n\log(r_n/H_n) + \log(1/\epsilon_n) + \text{constant} = O(H_n\log{n})$ (note that $\log(1/\epsilon_n) = O(\log{n}$)), 
then for sufficiently large $n$,
\begin{align*}
    -\log\pi(A) &\leq r_n\log(\dfrac{1}{\underline{\pi}_b}) + r_nO(H_n\log(n)) + r_n\log(\dfrac{1}{2}) + 1\\
    &= O(r_nH_n\log(n)+ r_n\log\bar{L}).
\end{align*}
Thus, the prior satisfies $-\log{\pi(A)} \leq \gamma n\epsilon_n^2$ for sufficiently large $n$, when $n\epsilon_n^2 \geq C_0(r_nH_n\log{n}+ r_n\log\bar{L})$ for some sufficiently large
constant $C_0$. Thus condition (c) holds.

\textbf{Checking condition (a):}

Let $\BP_n$ denote the set of probability densities for the RNNs whose parameter vectors   satisfy 
\[
\bbeta \in B_n = \Big\{|\beta_j| \leq M_n, \Gamma_{\bbeta} = \{i:|\beta_i| \geq \delta_n^{\prime}\} \text{ satisfies } |\Gamma_{\bbeta}| \leq k_n r_n, |\Gamma_{\bbeta}|_{in} \leq k^{\prime}_n s_n\} \Big\},
\]
where $|\Gamma_{\bbeta}|_{in}$ denotes the number of input connections with the absolute weights greater than $\delta_n'$,
$k_n (\leq n/r_n)$ and $k_n^{\prime} (\leq n/s_n)$ will be specified later, and 
\begin{align*}
    &\delta_n^{\prime} = \dfrac{c_1\epsilon_n}{M_l^{H_n}K_n(k_nr_n/H_n)^{2M_lH_n}(c_0M_n)^{2M_lH_n}},
\end{align*}
for some $c_1 >0, c_0 > 0$. 
Let 
\[
 \delta_n = \dfrac{c_1\epsilon_n}{M_l^{H_n}H_n(k_nr_n/H_n)^{2M_lH_n}(c_0M_n)^{2M_lH_n}}.
\]
Consider two parameter vectors $\bbeta^{u}$ and $\bbeta^{v}$ in set $B_n$, such that there exists a structure $\bgamma$ with $|\bgamma| \leq k_nr_n$ and $|\bgamma|_{in} \leq k_n^{\prime}s_n$, and $|\bbeta^{u}_j - \bbeta^{v}_j| \leq \delta_n$ for all $j \in \bgamma$, $\max(|\bbeta^{v}_j|,|\bbeta^{u}_j|) \leq \delta_n^{\prime}$ for all $j \notin \bgamma$. Hence, by Lemma \ref{lemma: odiff}, we have that $|\mu(\bx_{1:M_l}, \bbeta^{u}) - \mu(\bx_{1:M_l}, \bbeta^{v})|^2 \leq 9c^2_1\epsilon_n^2$. For two normal distributions $N(\mu_1, \sigma^2)$ and $N(\mu_2, \sigma^2)$, define the corresponding Kullback-Leibler divergence as
\begin{align*}
    d_0(p_{\mu_1}, p_{\mu_2}) = \dfrac{1}{2\sigma^2}|\mu_2-\mu_1|^2
\end{align*}
Together with the fact that $2d(p_{\mu_1}, p_{\mu_2})^2 \leq d_0(p_{\mu_1}, p_{\mu_2})$, we have 
\begin{align*}
    d(p_{\bbeta^{u}}, p_{\bbeta^{v}}) \leq \sqrt{d_0(p_{\bbeta^{u}}, p_{\bbeta^{v}})} \leq \sqrt{C(9+o(1))c_1^2\epsilon_n^2} \leq \epsilon_n
\end{align*}
for some $C>0$, given a sufficiently small $c_1$.

Given the above results, one can bound the packing number $N(\BP_n,\epsilon_n)$ by $\sum_{j=1}^{k_nr_n}\chi^{j}_{H_n}(\dfrac{2M_n}{\delta_n})^{j}$ where $\chi^{j}_{H_n}$ denotes the number of all valid networks who has exact $j$ connections and has no more than $k^{\prime}_ns_n$ inputs. Since $\log{\chi_{H_n}^j} \leq k^{\prime}s_n\log{p_n} + j\log(k^{\prime}s_nL_1 + 2H_n\bar{L}^2)$, $\log{M_n} = O(\log{n})$, $k_nr_n \leq n$, and $k^{\prime}_ns_n \leq n$, then
\begin{align*}
    \log N(\BP_n,\epsilon_n) &\leq \log(k_{n}r_{n}\chi_{H_n}^{k_nr_n}(\dfrac{2M_n}{\delta_n})^{k_nr_n})\\
    &\leq \log{k_nr_n} + k^{\prime}s_n\log{p_n} + k_nr_n\log(2H_n) + 2k_nr_n\log(\bar{L} + k^{\prime}_ns_n)\\
    &+ k_nr_n\log{\dfrac{2M_nM_l^{H_n}H_n(k_{n}r_{n}/H_n)^{2M_lH_n}(c_0M_n)^{2M_lH_n}}{c_1\epsilon_n}}\\
    &= k_nr_nO(H_n\log{n} + \log{\bar{L}}) + k_n^{\prime}s_n\log{p_n}.
\end{align*}
We can choose $k_n$ and $k^{\prime}_n$ such that for sufficiently large $n$, $k_nr_n\{H_n\log{n} + \log{\bar{L}}\} \asymp k_n^{\prime}s_n\log{p_n} \asymp n\epsilon_n^2$ and then $\log N(\BP_n,\epsilon_n) \leq n\epsilon_n^2$.

\textbf{Checking condition (b):}

\begin{lemma}
\label{binomial}
(Theorem 1 of \cite{zubkov2013complete}) Let $X \sim Binomial(n, p)$ be a Binomial random variable. For any $1 < k < n - 1$
\begin{align*}
    Pr(X \geq k+1) \leq 1 - \Phi(\text{sign}(k-np)[2nH(p, k/n)]^{1/2}),
\end{align*}
where $\Phi$ is the cumulative distribution function (CDF) of the standard Gaussian distribution and $H(p, k/n) =
(k/n)\log(k/np) + (1 - k/n)\log[(1 - k/n)/(1 -p)]$.
\end{lemma}
Now,
\begin{align*}
    \pi(\BP_n^{c}) \leq Pr(\text{Binomial}(K_n, v_n) > k_nr_n) + K_n\pi_b(|\beta_j| > M_n) + Pr(|\Gamma_{\bbeta}|_{in} > k_n^{\prime}s_n),
\end{align*}
where $v_n = 1 - \pi_b([-\delta_n^{\prime}, \delta_n^{\prime}])$. By the condition on $\pi_b$ and the fact that $\delta_n^{\prime} \gg \eta_n^{\prime}$, it is easy to see that $v_n \leq \exp\{\tau[H_n\log{n} + \log{\bar{L}} + \log{p_n}] - \log{K_n}\}$ for some constant $\tau > 0$. Thus, by Lemma \ref{binomial}, $-\log{Pr(\text{Binomial}(K_n,v_n) > k_nr_n)} \approx \tau k_nr_n[H_n\log{n} + \log{\bar{L}} + \log{p_n}] \gtrsim n\epsilon_n^2$ due to the choice of $k_n$, and $-\log{Pr(|\Gamma_{\bbeta}|_{in} \geq k_n^{\prime}s_n)} \approx k_n^{\prime}s_n[\tau(H_n\log{n} + \log{\bar{L}} + \log{p_n}) + \log{(p_nK_n/L_1)}] \gtrsim n\epsilon_n^2$ due to the choice of $k_n^{\prime}$. Thus, condition (b) holds as well. Then by lemma \ref{lemma: pos_con_noniid}, the proof is completed.
\end{proof}

\subsection{Proof of Theorem \ref{thm:pos_con_main}}

\begin{proof}
To prove Theorem \ref{thm:pos_con_main} in the main text, it suffices to verify the four conditions on $\pi_b$ listed in Theorem \ref{pos_con_appendix}. Let $M_n = \max(\sqrt{2n}\sigma_{1,n}, E_n)$. Condition \ref{s1} can be verified by choosing $\sigma_{1.n}$ such that $E_n^2/2\sigma_{1.n}^{2} + \log{\sigma_{1,n}^2} = O(H_n\log{n} + \log{\bar{L}})$. Conditions \ref{s2} and \ref{s3} can be verified by setting $\lambda_n = 1/\{M_l^{H_n}K_n[n^{2M_lH_n}(\bar{L}p_n)]^{\tau}\}$ and $\sigma_{0,n} \prec 1/\{M_l^{H_n}\sqrt{n}K_n(n/H_n)^{2M_lH_n}(c_0M_n)^{2M_lH_n}\}$. Finally, condition \ref{s4} can be verified by $M_n \geq 2n\sigma_{0.n}^2$ and $\tau[H_n\log{n} + \log{\bar{L}}+\log{p_n}] + M_n^2/2\sigma_{1,n}^2 \geq n$. Finally, based on the proof above we see that $n\epsilon_n^2 \asymp r_nH_n\log{n} + r_n\log{\bar{L}} + s_n\log{p_n} +n\varpi_n^2$ and $\lim_{n\rightarrow \infty} \epsilon_n = 0$.

\end{proof}

\subsection{Posterior Consistency for Multilayer Perceptrons}
\label{appendix: MLPs}

As highlighted in Section \ref{main: theory}, the MLP can be formulated as a regression problem. Here, the input is $\bx_i = y_{i-1:i-R_l}$ for some $l \leq R_l \ll n$, with the corresponding output being $y_i$. Thus, the dataset $D_n$ can be represented as ${(\bx_i, y_i)}^{n}_{i=1+R_l}$. We apply the same assumptions as for the MLP, specifically \ref{assump1}, \ref{assump2}, and \ref{assump3}. We also use the same definitions and notations for the MLP as those in \cite{sun2021consistent, sun2021sparse}.

Leveraging the mathematical properties of sparse MLPs presented in \cite{sun2021consistent} and the proof of sparse RNNs for a single time series discussed above, one can straightforwardly derive the following Corollary. Let  $d(p_1, p_2)$ denote the integrated Hellinger distance between two conditional densities $p_1(y|\bx)$ and $p_2(y|\bx)$. Let $\pi(\cdot|D_n)$ be the posterior probability of an event.

\begin{corollary}
Suppose Assumptions \ref{assump1}, \ref{assump2}, and \ref{assump3} hold. If the mixture Gaussian prior (\ref{mg}) satisfies the conditions $\vcentcolon \lambda_n = O(1/[K_n[n^{H_n}(\bar{L}p_n)]^{\tau}])$ for some constant $\tau > 0$, $E_n/[H_n\log{n} + \log{\bar{L}}]^{1/2} \leq \sigma_{1,n} \leq n^{\alpha}$ for some constant $\alpha$, and $\sigma_{0,n} \leq \min\{1/[\sqrt{n}K_n(n^{3/2}\sigma_{1,n}/H_n)^{H_n}], 1/[\sqrt{n}K_n(nE_n/H_n)^{H_n}]\}$, then there exists an an error sequence $\epsilon_n^2 = O(\varpi^2_n) + O(\zeta_n^2)$ such that $\lim_{n \to \infty} \epsilon_n = 0$ and $\lim_{n \to \infty} n\epsilon_n^2 = \infty$, and the posterior distribution satisfies
\begin{align}
    \pi(d(p_{\bbeta}, p_{\mu^{*}}) \geq C\epsilon_n | D_n) = O_{P^{*}}(e^{-n\epsilon_n^2})
\end{align}
for sufficiently large $n$ and $C > 0$, where $\zeta_n^2 = [r_nH_n\log{n} + r_n\log{\bar{L}} + s_n\log{p_n}]/n$, $p_{\mu^{*}}$ denotes the underlying true data distribution, and $p_{\bbeta}$ denotes the data distribution reconstructed by the Bayesian MLP based on its posterior samples.
\end{corollary}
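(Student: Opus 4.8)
The plan is to mirror the two-step architecture used for the RNN in Theorem~\ref{thm:pos_con_main}: first establish posterior consistency under a generic independent shrinkage prior $\pi(\bbeta) = \prod_{j=1}^{K_n}\pi_b(\beta_j)$ satisfying tail and mass conditions analogous to (\ref{s1})--(\ref{s4}), and then specialize to the mixture Gaussian prior (\ref{mg}) by verifying those four conditions. Since the MLP reformulation $\{(\bx_i, y_i)\}_{i=1+R_l}^n$ with $\bx_i = y_{i-1:i-R_l}$ keeps the data strictly stationary and $\alpha$-mixing with exponentially decaying coefficients (Assumption~\ref{assump1}), the master posterior-consistency result for dependent data, Lemma~\ref{lemma: pos_con_noniid}, still applies verbatim; it reduces the problem to checking its three conditions, namely the entropy bound $\log N(\epsilon_n,\BP_n) \leq n\epsilon_n^2$, the complement-mass bound $\pi(\BP_n^{c}) \leq e^{-bn\epsilon_n^2}$, and the small-ball mass $\pi(\norm{p-p^{*}}_s \leq b'\epsilon_n) \geq e^{-\gamma n\epsilon_n^2}$.

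First I would replace the RNN output bounds of Lemmas~\ref{lemma: os} and~\ref{lemma: odiff} by their MLP counterparts, which are exactly the $t=1$ specializations already cited as ``Lemma S4 from \cite{sun2021consistent}.'' Because the MLP carries no hidden state across time, the recursion in $t$ disappears: the horizon factor $t^{i}$, the recurrent-weight product $(\prod_k r_{v_k})^{t-1}$, and the inflated exponents all collapse. Concretely, wherever the RNN bounds produce a factor $M_l^{H_n}$ and an exponent $2M_lH_n$, the MLP bounds produce $1$ and $H_n$ respectively. This is precisely the discrepancy between the prior conditions of Theorem~\ref{thm:pos_con_main} and those of the Corollary, so the simplification is bookkeeping rather than new analysis.

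With those bounds in hand, the three conditions of Lemma~\ref{lemma: pos_con_noniid} follow along the same lines as in the proof of Theorem~\ref{pos_con_appendix}. For the small-ball condition I would take $A = \{\bbeta : \max_{j\in\bgamma^{*}}|\beta_j-\beta_j^{*}| \leq \omega_n,\ \max_{j\notin\bgamma^{*}}|\beta_j| \leq \omega_n^{\prime}\}$ with $\omega_n,\omega_n^{\prime}$ scaled by the MLP difference bound, invoke the definition (\ref{true_drnn}) to absorb the approximation error $\varpi_n$, and convert the resulting sup-norm control into the $\norm{\cdot}_s$ and Kullback--Leibler bounds via $2d(\cdot,\cdot)^2 \leq d_0(\cdot,\cdot) = |\mu_1-\mu_2|^2/(2\sigma^2)$, obtaining $-\log\pi(A) = O(r_nH_n\log n + r_n\log\bar{L})$. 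For the entropy condition I would bound the packing number of the sieve $\BP_n$ (weights capped at $M_n$, at most $k_nr_n$ active connections and $k_n^{\prime}s_n$ active inputs) to get $\log N(\BP_n,\epsilon_n) = k_nr_n\, O(H_n\log n + \log\bar{L}) + k_n^{\prime}s_n\log p_n$, then choose $k_n,k_n^{\prime}$ so each term is $\asymp n\epsilon_n^2$. The complement-mass condition follows from the binomial tail bound (Lemma~\ref{binomial}) combined with the small-weight prior mass. The resulting rate is $n\epsilon_n^2 \asymp r_nH_n\log n + r_n\log\bar{L} + s_n\log p_n + n\varpi_n^2$, matching $\epsilon_n^2 = O(\varpi_n^2) + O(\zeta_n^2)$.

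Finally, to specialize to the mixture Gaussian prior I would set $M_n = \max(\sqrt{2n}\sigma_{1,n}, E_n)$ and verify the four conditions exactly as in Theorem~\ref{thm:pos_con_main}: condition (\ref{s1}) from the lower bound on $\sigma_{1,n}$; conditions (\ref{s2})--(\ref{s3}) from the stated $\lambda_n$ together with $\sigma_{0,n} \prec 1/\{\sqrt{n}K_n(n/H_n)^{H_n}(c_0M_n)^{H_n}\}$, where substituting $M_n = \max(\sqrt{2n}\sigma_{1,n}, E_n)$ turns this single requirement into the two-term minimum stated in the Corollary; and condition (\ref{s4}) from the size of $M_n$ relative to $\sigma_{1,n}$ and $\sigma_{0,n}$. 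The main thing to watch is that Lemma~\ref{lemma: pos_con_noniid} is the correct substitute for the i.i.d.\ posterior-consistency proposition used in \cite{sun2021consistent}; once that substitution is made, the only genuine departure from the cited MLP theory is the dependent-data entropy and small-ball accounting, which that lemma already supplies, leaving the exponent simplification described above as the sole remaining step.
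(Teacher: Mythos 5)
Your proposal is correct and follows essentially the same route as the paper, which simply states that the corollary follows by combining the sparse-MLP bounds of \cite{sun2021consistent} with the dependent-data argument of Theorem \ref{pos_con_appendix} (i.e., checking the three conditions of Lemma \ref{lemma: pos_con_noniid} and then the four prior conditions with $M_n=\max(\sqrt{2n}\sigma_{1,n},E_n)$). Your identification of the exponent bookkeeping --- the factor $M_l^{H_n}$ collapsing to $1$ and $2M_lH_n$ to $H_n$ once the recursion in $t$ disappears --- is exactly the content the paper leaves implicit.
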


\section{Asymptotic Normality of Connection Weights and Predictions}
\label{appendix: asymptotic_normality}
\textcolor{black}{This section provides detailed assumptions and proofs for Theorem \ref{thm: asym_w} and \ref{thm: asym_p}}.
For simplicity, we assume $y_{0:1-R_l}$ is also given, and we let $M_l = R_l + 1$. Let $l_n(\bbeta) = \frac{1}{n} \sum_{i=1}^{n} \log(p_{\bbeta}(y_i|y_{i-1:i-R_l}))$ denote the likelihood function, and let $\pi(\bbeta)$ denote the density of the mixture Gaussian prior (\ref{mg}). Let $h_{i_1, i_2,\dots,i_d}(\bbeta) = \dfrac{\partial^d{l_n(\bbeta)}}{\partial{\beta_{i_1}}\cdots\partial{\beta_{i_d}}}$ which denotes the $d$-th order partial derivatives. Let $H_n(\bbeta)$ denote the Hessian matrix of $l_n(\bbeta)$, and let $h_{i,j}(\bbeta), h^{i,j}(\bbeta)$ denote the $(i,j)$-th component of $H_n(\bbeta)$ and $H_n^{-1}(\bbeta)$, respectively. Let $B_{\lambda, n} =  \bar{\lambda}_n^{1/2}(\bbeta^{*})  / \underbar{$\lambda$}_n(\bbeta^{*})$ and $b_{\lambda, n} = \sqrt{r_n/n}B_{\lambda, n}$. For a RNN with $\bbeta$, we define the weight truncation at the true model structure $\bgamma^{*} : (\bbeta_{\bgamma^{*}})_i = \bbeta_i$ for $i \in \bgamma^{*}$ and $(\bbeta_{\bgamma^{*}})_i = 0$ for $i \notin \bgamma^{*}$. For the mixture Gaussian prior (\ref{mg}), let $B_{\delta_n}(\bbeta^{*}) = \{\bbeta : |\bbeta_i - \bbeta^{*}_i| \leq \delta_n, \forall i \in \bgamma^{*}, |\bbeta_i - \bbeta^{*}_i| \leq 2\sigma_{0,n}\log(\dfrac{\sigma_{1,n}}{\lambda_n\sigma_{0,n}}), \forall i \notin \bgamma^{*}\}$.

In addition, we make the following assumptions for Theorem \ref{thm: asym_w}
and Theorem \ref{thm: asym_p}. 


\begin{assumption} \label{normalityass:1}
\label{thm: asym_w_appendix}
    Assume the conditions of Lemma \ref{lemma: struc_con_main} hold with $\rho(\epsilon_n) = o(\dfrac{1}{K_n})$ and the $C_1 \geq \dfrac{2}{3}$ defined in Assumption \ref{assump2}. For some $\delta_n$ s.t. $\dfrac{r_n}{\sqrt{n}} \lesssim \delta_n \lesssim \dfrac{1}{\sqrt[^3]{n}r_n}$, let $A(\epsilon_n, \delta_n) = \{\bbeta: \max_{i \in \bgamma^{*}} |\bbeta_{i} - \bbeta^{*}_i| > \delta_n, d(p_{\bbeta}, p_{\mu^{*}}) \leq \epsilon_n\}$, where $\epsilon_n$ is the posterior contraction rate as defined in Theorem \ref{thm:pos_con_main}. Assume there exists some constants $C > 2$ and $M > 0$ such that

    \begin{itemize}
        \item[C.1] $\bbeta^{*} = (\bbeta_1,\dots,\bbeta_{K_n})$ is generic, $\min_{i \in \bgamma^{*}} |\bbeta_{i}^{*}| > C\delta_n$ and $\pi(A(\epsilon_n, \delta_n) | D_n) \to 0$ as $n \to \infty$.

        \item[C.2] $|h_i(\bbeta^{*})| < M, |h_{j,k}(\bbeta^{*})| < M, |h^{j,k}(\bbeta^{*})| < M, |h_{i,j,k}(\bbeta)| < M, |h_l(\bbeta)| < M$ hold for any $i,j,k \in \bgamma^{*}, l \notin \bgamma^{*}$, and $\bbeta \in B_{2\delta_n}(\bbeta^{*})$.

        \item[C.3] $\sup\{|E_{\bbeta}(a^{\prime}U^3)| : \norm{\bbeta_{\bgamma^{*}}-\bbeta^{*}}\leq 1.2b_{\lambda, n},\norm{a}=1 \leq 0.1\sqrt{n/r_n} \underbar{$\lambda$}_n^2(\bbeta^{*})/\bar{\lambda}_n^{1/2}(\bbeta^{*})\}$ and $B_{\lambda, n} = O(1)$, where $U = Z - E_{\bbeta_{\bgamma^{*}}}(Z)$, $Z$ denotes a random variable drawn from a neural network model parameterized by $\bbeta_{\bgamma^{*}}$, and $E_{\bbeta_{\bgamma^{*}}}$ denotes the mean of $Z$.

        \item[C.4] $r_n^2/n \to 0$ and the conditions for Theorem $2$ of \cite{saikkonen1995dependent} hold.

    \end{itemize}
\end{assumption}


Conditions $C.1$ to $C.3$ align with the assumptions made in \cite{sun2021sparse}. An additional assumption, $C.4$, is introduced to account for the dependency inherent in time series data. This assumption is crucial and employed in conjunction with $C.3$ to establish the consistency of the maximum likelihood estimator (MLE) of $\bbeta_{\bgamma^{*}}$ for a given structure $\bgamma^{*}$. While the assumption $r_n/n \to 0$ is sufficient for independent data, which is implied by Assumption \ref{assump2}, a stronger restriction, specifically $r_n^2/n \to 0$, is necessary for dependent data such as time series. It is worth noting that the conditions used in Theorem $2$ of \cite{saikkonen1995dependent} pertain specifically to the time series data itself.


Let $\mu_{i_1, i_2,\dots,i_d}(\cdot, \bbeta) = \dfrac{\partial^d{\mu(\cdot, \bbeta)}}{\partial{\beta_{i_1}}\cdots\partial{\beta_{i_d}}}$ denotes the d-th order partial derivative for some input.

\begin{assumption} \label{normalityass:2}
 $|\mu_i(\cdot, \bbeta^{*})| < M, |\mu_{i,j}(\cdot, \bbeta)| < M, |\mu_k(\cdot, \bbeta)| < M$ for any $i, j \in \bgamma^{*}, k \notin \bgamma^{*}$, and $\bbeta \in B_{2\delta_n}(\bbeta^{*})$, where $M$ is as defined in Assumption \ref{thm: asym_w_appendix}.
\end{assumption}


\paragraph{Proof of Theorem \ref{thm: asym_w} and Theorem \ref{thm: asym_p}}
The proof of these two theorems can be conducted following the approach in \cite{sun2021sparse}. The main difference is that in \cite{sun2021sparse}, they rely on Theorem $2.1$ of \cite{portnoy1988asymptotic}, which assumes independent data. However, the same conclusions can be extended to time series data by using Theorem $2$ of \cite{saikkonen1995dependent}, taking into account assumption $C.4$. This allows for the consideration of dependent data in the analysis.

\section{Computation}
\label{appendix: computation}
\textcolor{black}{Algorithm \ref{alg: pa} gives the prior annealing procedure\cite{sun2021sparse}.}
In practice, the following implementation can be followed based on Algorithm \ref{alg: pa}:

\begin{itemize}
\item For $0 < t < T_1$, perform initial training.
\item For $T_1 \leq t \leq T_2$, set $\sigma_{0,n}^{(t)} = \sigma_{0,n}^{init}$ and gradually increase $\eta^{(t)} = \frac{t-T_1}{T_2 - T_1}$.
\item For $T_2 \leq t \leq T_3$, set $\eta^{(t)} = 1$ and gradually decrease $\sigma_{0,n}^{(t)}$ according to the formula: $\sigma_{0,n}^{(t)} = \left(\frac{T_3 -t}{T_3-T_2}\right)(\sigma_{0,n}^{init})^2 + \left(\frac{t -T_2}{T_3-T_2}\right)(\sigma_{0,n}^{end})^2$.
\item For $t > T_3$, set $\eta^{(t)} = 1$, $\sigma_{0,n}^{(t)} = \sigma_{0,n}^{end}$, and gradually decrease the temperature $\tau$ according to the formula: $\tau = \frac{c}{t-T_3}$, where $c$ is a constant.
\end{itemize}

Please refer to Appendix \ref{appendix: prior_annealing_practice} for intuitive explanations of the prior annealing algorithm and further details on training a model to achieve the desired sparsity.


\begin{algorithm}[h]
   \caption{Prior Annealing: Frequentist}
   \label{alg: pa}
\begin{algorithmic}
   \STATE[1] (Initial Training) Train a neural network satisfying condition $(S^{*})$ such that a global optimal solution $\bbeta_0 = \argmax_{\bbeta} l_n(\bbeta)$ is reached, this stage can be accomplished by using SGD or Adam.
   \vspace{2mm}

   \STATE[2] (Prior Annealing) Initialize $\bbeta$ at $\bbeta_0$, and simulate from a sequence of distributions $\pi(\bbeta | D_n, \tau, \eta^{(k)}, \sigma^{(k)}_{0,n}) \propto e^{nl_n(\bbeta)/\tau}\pi_{k}^{\eta^{(k)}/\tau}(\bbeta)$ for $k = 1, 2, \dots, m$, where $0 < \eta^{(1)} \leq \eta^{(2)} \leq \dots \leq \eta^{(m)}=1$, $\pi_k = \lambda_n N(0, \sigma^{2}_{1,n}) + (1-\lambda_n)N(0, (\sigma^{(k)}_{0,n})^2)$, and $\sigma_{0, n}^{init} = \sigma^{(1)}_{0,n} \geq \sigma^{(2)}_{0,n} \geq \dots \geq \sigma^{(m)}_{0,n}=\sigma^{end}_{0,n}$. This can be done by using stochastic gradient MCMC algorithms \cite{ma2015complete}. After the stage $m$ has been reached, continue to run the simulated annealing algorithm by gradually decreasing the temperature $\tau$ to a very small value. Denote the resulting model by $\hat{\bbeta}$.
   \vspace{2mm}

    \STATE[3] (Structure Sparsification) For each connection weight $i \in \{1, 2,\dots,K_n\}$, set $\tilde{\bgamma}_i = 1$, if $|\bbeta_i| > \dfrac{\sqrt{2}\sigma_{0,n}\sigma_{1,n}}{\sqrt{\sigma_{1,n}^2 - \sigma_{0,n}^2}}\sqrt{\log(\dfrac{1-\lambda_n}{\lambda_n}\dfrac{\sigma_{1,n}}{\sigma_{0,n}})}$ and $0$ otherwise, where the threshold value is determined by solving $\pi(\bgamma_i = 1 | \bbeta) > 0.5$, and $\sigma_{0,n} = \sigma_{0,n}^{end}$. Denote the the structure of the sparse RNN by $\tilde{\bgamma}$.
    \vspace{2mm}
    
\STATE[4] (Nonzero-weights Refining) Refine the nonzero weights of the sparse model $(\hat{\bbeta}, \tilde{\bgamma})$ by maximizing $l_n(\bbeta)$. Denote the resulting estimate by $(\tilde{\bbeta}, \tilde{\bgamma})$.
\end{algorithmic}
\end{algorithm}

\section{Mixture Gaussian Prior}
\label{appendix: prior_annealing_practice}

\begin{figure}[h]
\vskip 0.2in
\begin{center}
\centerline{\includegraphics[width=\columnwidth]{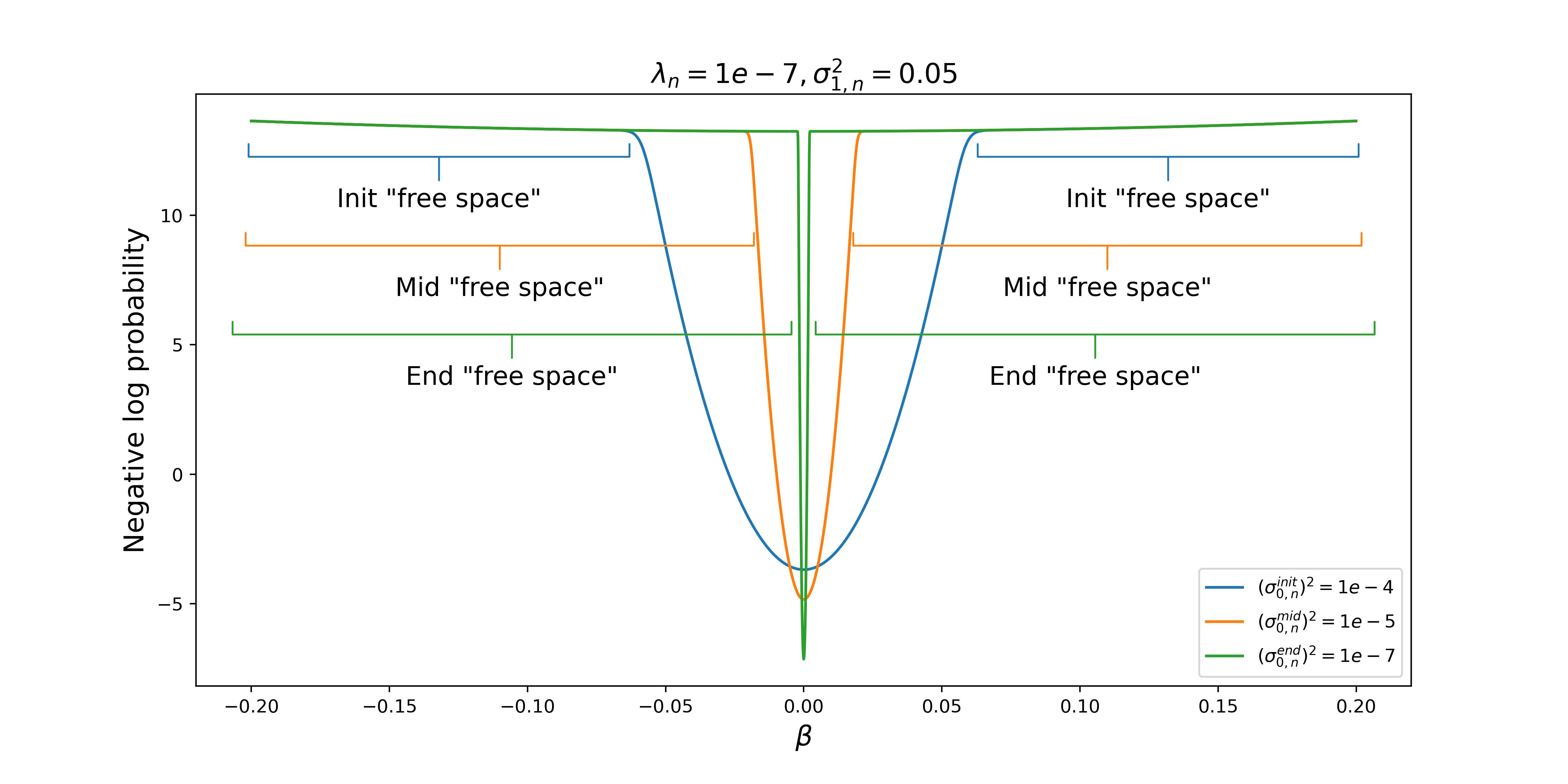}}
\caption{Changes of the negative log prior density function in prior annealing of 
  $\sigma_{0,n}^2$, where the values of $\lambda_n$ and $\sigma_{1,n}^2$ are fixed.}
\label{fig: practice_explaination}
\end{center}
\vskip -0.2in
\end{figure}

The mixture Gaussian prior imposes a penalty on the model parameters by acting as a piecewise L2 penalty, applying varying degrees of penalty in different regions of the parameter space. Given values for $\sigma_{0,n}$, $\sigma_{1,n}$, and $\lambda_n$, a threshold value can be computed using Algorithm \ref{alg: pa}. Parameters whose absolute values are below this threshold will receive a large penalty, hence constituting the "penalty region", while parameters whose absolute values are above the threshold will receive relatively minimal penalty, forming the "free space". The severity of the penalty in the free space largely depends on the value of $\sigma_{1,n}$.

For instance, as depicted in Figure \ref{fig: practice_explaination}, based on the simple practice implementation detailed in Appendix \ref{appendix: computation}, we fix $\lambda_n=1e-7$, $\sigma_{1,n}^2=0.05$, and we set $(\sigma_{0,n}^{init})^2=1e-5$, $(\sigma_{0,n}^{end})^2=1e-7$, and gradually reduce $\sigma_{0,n}^2$ from the initial value to the end value. Initially, the free space is relatively small and the penalty region is relatively large. However, the penalty imposed on the parameters within the initial penalty region is also minor, making it challenging to shrink these parameters to zero. As we progressively decrease $\sigma_{0,n}^2$, the free space enlarges, the penalty region diminishes, and the penalty on the parameters within the penalty region intensifies simultaneously. Once $\sigma_{0,n}^2$ equals $(\sigma_{0,n}^{end})^2$, the parameters within the penalty region will be proximate to zero, and the parameters outside the penalty region can vary freely in almost all areas of the parameter space.

For model compression tasks, achieving the desired sparsity involves several steps post the initial training phase outlined in Algorithm \ref{alg: pa}. First, determine the initial threshold value based on pre-set values for $\lambda_n$, $\sigma_{1,n}^2$, and $(\sigma_{0,n}^{init})^2$. Then, compute the proportion of the parameters in the initial model whose absolute values are lower than this threshold. This value serves as an estimate for anticipated sparsity. Adjust the $(\sigma_{0,n}^{init})^2$ value until the predicted sparsity aligns closely with the desired sparsity level.

\section{Construction of Multi-Horizon Joint Prediction Intervals}
\label{appendix: joint_pred_interval}

To construct valid joint prediction intervals for multi-horizon forecasting, we estimate the individual variances for each time step in the prediction horizon using procedures similar to one-step-ahead forecasting. We then adjust the critical value by dividing $\alpha$ by the number of time steps $m$ in the prediction horizon and use $z_{\alpha/(2m)}$ as the adjusted critical value. This Bonferroni correction ensures the desired coverage probability across the entire prediction horizon.

As an example, we refer to the experiments conducted in \ref{main: exp_set_of_ts}, where we work with a set of training sequences denoted by
$D_n = \{y_{1}^{(i)}, \ldots, y_{T}^{(i)}, \ldots y_{T+m}^{(i)}\}_{i=1}^{n}$. Here, $y_{T+1}^{(i)}, y_{T+2}^{(i)}, \ldots, y_{T+m}^{(i)}$ represent the prediction horizon for each sequence, while $T$ represents the length of the observed sequence.

\begin{itemize}
    \item Train a model by the proposed algorithm, and denote the trained model by $(\hat{\bbeta}, \hat{\bgamma})$. 

    \item  Calculate $\hat{\boldsymbol{\sigma}}^2$ as an estimator of $\boldsymbol{\sigma}^2 \in \BR^{m\times 1}$: 
        \begin{align*}
           \hat{\boldsymbol{\sigma}}^2 = \dfrac{1}{n-1}\sum_{i=1}^{n}\left(\bmu(\hat{\bbeta}, y_{1:T}^{(i)}) - y^{(i)}_{T+1:T+m}\right) \otimes 
           \left(\bmu(\hat{\bbeta}, y_{1:T}^{(i)}) - y^{(i)}_{T+1:T+m}\right),
        \end{align*}
        where $\bmu(\hat{\bbeta}, y_{1:T}^{(i)}) = \hat{y}_{T+1:T+m}^{(i)}$, and 
        $\otimes$ denotes elementwise product. 

    \item  For a test sequence $y_{1:T}^{(0)}$, calculate
        \begin{align*}
            \hat{\boldsymbol{\Sigma}} =\nabla_{\hat{\bgamma}}\bmu(\hat{\bbeta}, y_{1:T}^{(0)})^{\prime}(-\nabla^2_{\hat{\bgamma}}l_n(\hat{\bbeta}))^{-1}\nabla_{\hat{\bgamma}}\bmu(\hat{\bbeta}, y_{1:T}^{(0)}).
        \end{align*}
    Let $\hat{\boldsymbol{\varsigma}} \in \BR^{m\times 1}$ denote the vector formed by 
    the diagonal elements of $\hat{\boldsymbol{\Sigma}}$.

    \item The Bonferroni simultaneous prediction intervals for all elements of $y_{T+1:T+m}^{(0)}$ are given by
    \begin{align*}
      \bmu(\hat{\bbeta}, y_{1:T}^{(0)}) \pm z_{\alpha/(2m)}\left(\dfrac{1}{n}\hat{\boldsymbol{\varsigma}} + \hat{\boldsymbol{\sigma}}^2\right)^{\circ\frac12}
    \end{align*}
    where $\circ\frac12$ represents the element-wise square root operation.
\end{itemize}

The Bayesian method is particularly advantageous when dealing with a large number of non-zero connection weights in $(\hat{\bbeta}, \hat{\bgamma})$, making the computation of the Hessian matrix of the log-likelihood function costly or unfeasible. For detailed information on utilizing the Bayesian method for constructing prediction intervals, please refer to \cite{sun2021sparse}.

\section{Numerical Experiments}
\label{appendix: numerical experiments}

\subsection{French Electricity Spot Prices}
\label{appendix: fesp}

\textbf{Dataset.} The given dataset contains the spot prices of electricity in France that were established over a period of four years, from $2016$ to $2020$, using an auction market. In this market, producers and suppliers submit their orders for the next day's $24$-hour period, specifying the electricity volume in MWh they intend to sell or purchase, along with the corresponding price in \texteuro/MWh. At midnight, the Euphemia algorithm, as described in \cite{dourbois2015european}, calculates the $24$ hourly prices for the next day, based on the submitted orders and other constraints. This hourly dataset consists of $35064$ observations, covering $(3 \times 365 + 366) \times 24$ periods. Our main objective is to predict the $24$ prices for the next day by considering different explanatory variables such as the day-ahead forecast consumption, day-of-the-week, as well as the prices of the previous day and the same day in the previous week, as these variables are crucial in determining the spot prices of electricity. Refer to Figure \ref{fig: online&visualization} for a visual representation of the dataset.

The prediction models and training settings described below are the same for all $24$ hours.

\begin{figure}[h]%
    \centering
    \subfloat[\centering]{{\includegraphics[width=0.5\linewidth]{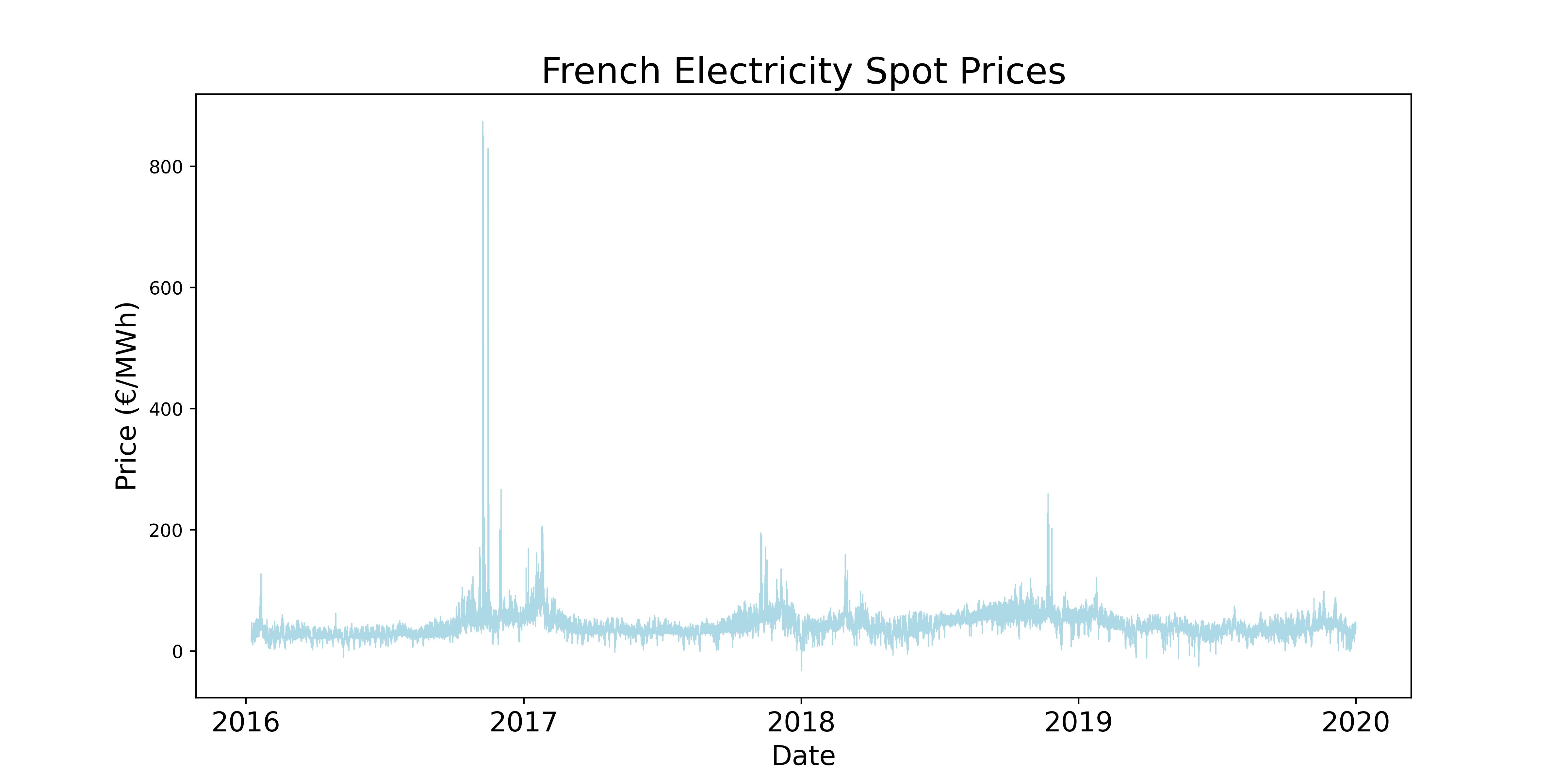} }}%
    \qquad
    \subfloat[\centering]{{\includegraphics[width=0.4\linewidth]{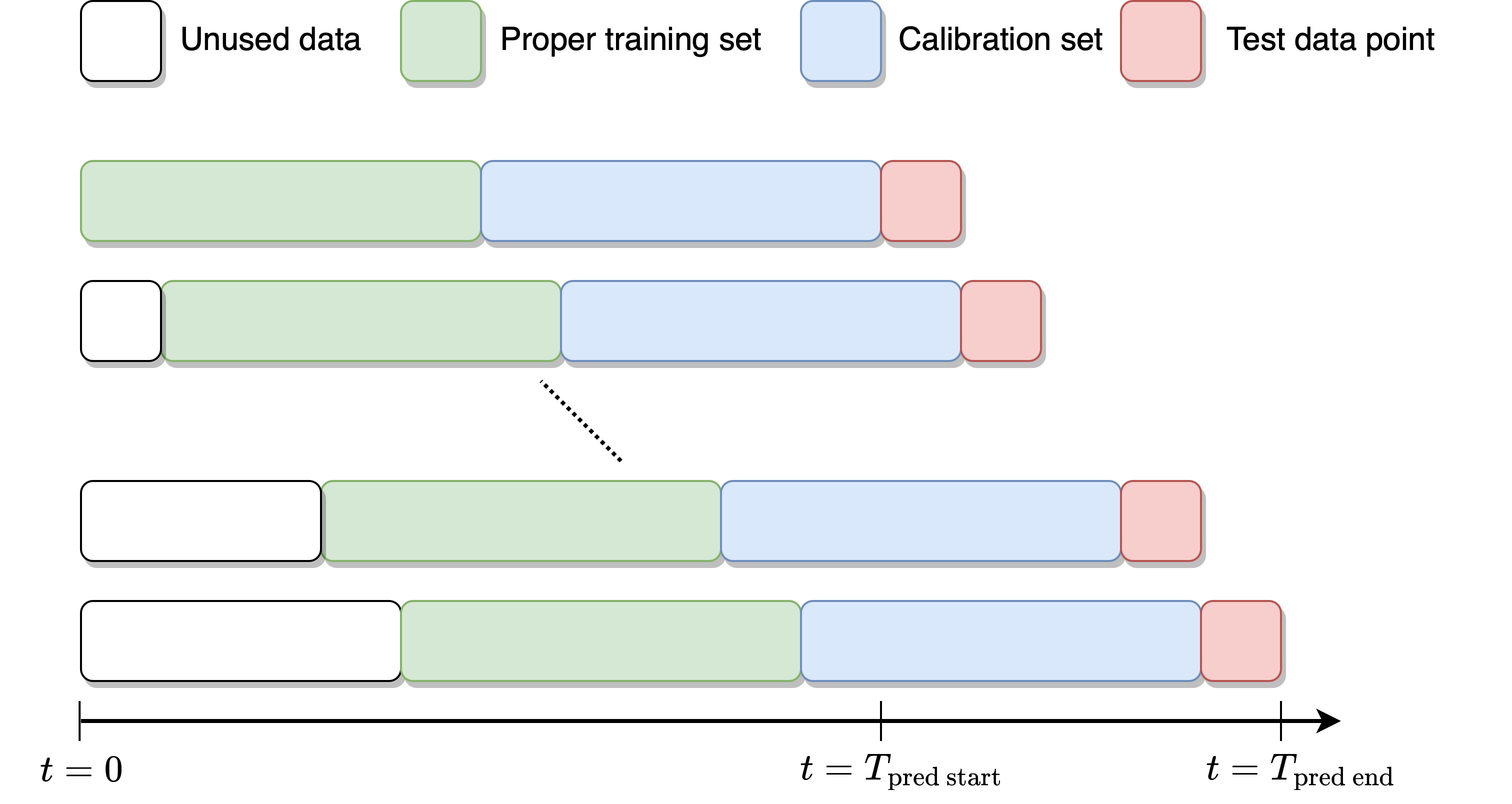} }}%
    \caption{(a): French electricity spot prices ($2016$ to $2019$). (b): Online learning method used for ACI and AgACI in \cite{zaffran2022adaptive}, where the prediction range is assumed to between $T_{\text{pred start}}$ and $T_{\text{pred end}}$.}
    \label{fig: online&visualization}%
\end{figure}

\textbf{Prediction Model.} We use an MLP with one hidden layer of size $100$ and the sigmoid activation function as the underlying prediction model for all methods and all hours.

\textbf{Training: Baselines.} For all CP baselines, we train the MLP for $300$ epochs using SGD with a constant learning rate of $0.001$ and momentum of $0.9$, the batch size is set to be $100$. For the ACI, we use the same list of $\gamma \in \{0,0.000005,0.00005,0.0001, 
 0.0002,0.0003,0.0004,0.0005,
0.0001,0.0002,0.0003,0.0004, \\ 
0.0005,0.0006,
0.0007,0.0008,0.0009,0.001,0.002,0.003,0.004,
0.005,0.006,0.007,0.008,0.009, \\
0.01,0.02,0.03,0.04,0.05,0.06,0.07,0.08,0.09\}$ as in \cite{zaffran2022adaptive}.

\textbf{Training: Our Method.} For our method PA, we train a total of $300$ epochs with the same learning rate, momentum, and batch size. We use $T_1 = 150$, $T_2 = 160$, $T_3 = 260$. We run SGD with momentum for $t < T_1$ and SGHMC with temperature$=1$ for $t >= T_1$. For the mixture Gaussian prior, we fix $\sigma_{1,n}^2 = 0.01$, $(\sigma_{1,n}^{init})^2 = 1e-5$, $(\sigma_{1,n}^{end})^2 = 1e-6$, and $\lambda_{n} = 1e-7$. 


\begin{table}[h]
\caption{Uncertainty quantification results produced by different methods for the French electricity spot prices dataset. This table corresponds to the Figure \ref{fig: fesp_results}.}
\label{table: fesp_results}
\begin{center}
\begin{adjustbox}{max width=\textwidth}
\begin{tabular}{l | c | c | c} 
\toprule
Methods & Coverage & Average PI Lengths (standard deviation) & Median PI Lengths (interquartile range)\\
\midrule
PA offline (\textbf{ours}) & $90.06$ & $20.19(3.23)$ & $20.59(1.82)$\\
AgACI online & $91.05$ & $23.27(14.49)$ & $21.38(2.62)$\\
ACI $\gamma=0.01$ online & $90.21$ & $22.13(8.50)$ & $20.68(2.70)$\\
ACI $\gamma=0.0$ online & $92.89$ & $24.39(9.17)$ & $22.99(2.91)$\\
EnbPI V2 online & $91.23$ & $27.99(10.17)$ & $26.55(5.09)$\\
NexCP online & $91.18$ & $24.41(10.40)$ & $ 22.88(2.86)$\\
\bottomrule
\end{tabular}
\end{adjustbox}
\end{center}
\end{table}

\subsection{EEG, MIMIC-III, and COVID-19}
\label{appendix: set_of_ts}

\textbf{EEG} The EEG dataset, available at \href{https://archive.ics.uci.edu/ml/datasets/EEG+Database}{\textcolor{blue}{here}}, served as the primary source for the EEG signal time series. This dataset contains responses from both control and alcoholic subjects who were presented with visual stimuli of three different types. To maintain consistency with previous work \cite{stankeviciute2021conformal}, we utilized the medium-sized version, consisting of $10$ control and $10$ alcoholic subjects. We focused solely on the control subjects for our experiments, as the dataset summaries indicated their EEG responses were more difficult to predict. For detailed information on our processing steps, please refer to \cite{stankeviciute2021conformal}.

\textbf{COVID-19} We followed the processing steps of previous research \cite{stankeviciute2021conformal} and utilized COVID-19 data from various regions within the same country to minimize potential distribution shifts while adhering to the exchangeability assumption. The lower tier local authority split provided us with $380$ sequences, which we randomly allocated between the training, calibration, and test sets over multiple trials. The dataset can be found at \href{ https://coronavirus.data.gov.uk/}{\textcolor{blue}{here}}.

\textbf{MIMIC-III} We collect patient data on the use of antibiotics (specifically Levofloxacin) from the MIMIC-III dataset \cite{johnson2016mimic}. However, the subset of the MIMIC-III dataset used in \cite{stankeviciute2021conformal} is not publicly available to us, and the authors did not provide information on the processing steps, such as SQL queries and Python code. Therefore, we follow the published processing steps provided in \cite{lin2022conformal}. We remove sequences with fewer than $5$ visits or more than $47$ visits, resulting in a total of $2992$ sequences. We randomly split the dataset into train, calibration, and test sets, with corresponding proportions of $43\%$, $47\%$, and $10\%$. We use the white blood cell count (high) as the feature for the univariate time series from these sequences. Access to the MIMIC-III dataset requires \href{https://mimic.mit.edu/docs/gettingstarted/}{\textcolor{blue}{PhysioNet credentialing}}.

Table \ref{table: train_details_baselines_set_ts} presents the training hyperparameters and prediction models used for the baselines in the three datasets, while Table \ref{table: train_details_our_set_ts} presents the training details used for our method.

We hypothesize that the mediocre performance on the EEG dataset is due to the small size of the prediction model, which has only $2025$ parameters and is substantially smaller than the number of training sequences. Hence, we conducted experiments on the EEG dataset using a slightly larger prediction model for different methods. The results are presented in Table \ref{table: additional_egg_results}, and the corresponding training details are provided in Table \ref{table: train_details_baselines_set_ts}. Again, our method outperforms other baselines as well.

\begin{table}[h]
\caption{Training details for the baselines, where LSTM-$d$ refers to a single hidden layer LSTM network with a size of $d$.}
\label{table: train_details_baselines_set_ts}
\begin{small}
\begin{center}
\begin{adjustbox}{max width=\textwidth}
\begin{tabular}{@{}c | c | c | c  @{}}
\toprule
& \textbf{MIMIC-III} & \textbf{EEG}  & \textbf{COVID-19}\\
\midrule
model & LSTM-$500$ & LSTM-$20$ & LSTM-$20$ \\
learning rate & $2e-4$ & $0.01$ & $0.01$ \\
Epochs & $65$ & $100$ & $1000$\\
Optimizer & Adam & Adam & Adam\\
Batch size &$150$ & $150$ & $150$\\
\bottomrule
\end{tabular}
\end{adjustbox}
\end{center}
\end{small}
\end{table}

\begin{table}[h]
\caption{Training details for our method, where LSTM-$d$ refers to a single hidden layer LSTM network with a size of $d$.}
\label{table: train_details_our_set_ts}
\begin{small}
\begin{center}
\begin{adjustbox}{max width=\textwidth}
\begin{tabular}{@{}c | c | c | c  @{}}
\toprule
& \textbf{MIMIC-III} & \textbf{EEG}  & \textbf{COVID-19}\\
\midrule
Model & LSTM-$500$ & LSTM-$20$ & LSTM-$20$ \\
Learning rate & $2e-4$ & $0.01$ & $0.01$ \\
Epochs & $65$ & $100$ & $1000$\\
Optimizer & Adam & Adam & Adam\\
Batch size & $150$ & $150$ & $150$\\
$\sigma_{1,n}^2$ & $0.05$ & $0.1$ & $0.1$\\
$(\sigma_{0,n}^{init})^2$ & $1e-5$ & $1e-7$ & $1e-7$\\
$(\sigma_{0,n}^{end})^2$ & $1e-6$ & $1e-8$ & $1e-8$\\
$\lambda_{n}$ & $1e-7$ & $1e-7$ & $1e-7$\\
Temperature & $1$ & $1$ & $1$\\
$T_1$ & $40$ & $50$ & $800$\\
$T_2$ & $40$ & $55$ & $850$\\
$T_3$ & $55$ & $80$ & $950$\\
\bottomrule
\end{tabular}
\end{adjustbox}
\end{center}
\end{small}
\end{table}

\begin{table}[h]
\caption{Uncertainty quantification results produced by various methods for the EEG data using a larger prediction model. The ``coverage'' represents the average joint coverage  rate over different random seeds. The prediction interval (PI) lengths are averaged across prediction horizons and random seeds. The values in parentheses indicate the standard deviations of the respective means.}
\label{table: additional_egg_results}
\begin{small}
\begin{center}
\begin{adjustbox}{max width=\textwidth}
\begin{tabular}{@{}c | c c@{}}
\toprule
\multicolumn{1}{l}{} & \multicolumn{2}{c}{\textbf{EEG}}\\
\midrule
\multicolumn{1}{l}{\textbf{Model}} & Coverage & PI lengths\\
\midrule
PA-RNN  & $91.0\%(0.8\%)$ & $40.84(5.69)$\\
CF-RNN  & $90.8\%(1.8\%)$ & $43.4(6.79)$\\
MQ-RNN  & $45.2\%(2.5\%)$ & $20.63(2.07)$\\
DP-RNN  & $0.6\%(0.1\%)$  & $5.02(0.53)$\\
\bottomrule
\end{tabular}
\end{adjustbox}
\end{center}
\end{small}
\end{table}

\begin{table}[h]
\caption{Training details for the EEG data with a slightly larger prediction model for all methods, where "-" denotes that the information is not applicable.}
\label{table: add_eeg_train}
\begin{small}
\begin{center}
\begin{adjustbox}{max width=\textwidth}
\begin{tabular}{@{}c | c | c | c | c @{}}
\toprule
& PA-RNN & CF-RNN  & MQ-RNN & DP-RNN\\
\midrule
model & LSTM-$100$ & LSTM-$100$ & LSTM-$100$ & LSTM-$100$\\
learning rate & $1e-3$ & $1e-3$ & $1e-3$ & $1e-3$ \\
Epochs & $150$ & $150$ & $150$ & $150$\\
Optimizer & Adam & Adam & Adam & Adam\\
Batch size &$150$ & $150$ & $150$ & $150$\\
$\sigma_{1,n}^2$ & $0.01$          & - & - & -\\
$(\sigma_{0,n}^{init})^2$ & $1e-5$ & -  & -  & - \\
$(\sigma_{0,n}^{end})^2$  & $1e-6$ & -  & -  & - \\
$\lambda_{n}$ & $1e-7$             & -  & -  & - \\
temperature & $1$                  & -  & -  & - \\
$T_1$ & $100$                      & -  & -  & - \\
$T_2$ & $105$                      & -  & -  & - \\
$T_3$ & $130$                      & -  & -  & - \\
\bottomrule
\end{tabular}
\end{adjustbox}
\end{center}
\end{small}
\end{table}

\subsection{Autoregressive Order Selection}
\label{appendix: model_selection}

\textbf{Model} An Elman RNN with one hidden layer of size $1000$. Different window sizes (i.e., $1$ or $15$) will result in a different total number of parameters..

\textbf{Hyperparameters} All training hyperparameters are given in Table \ref{table: simulation_hyperparameters}

\begin{table}[h]
\caption{Training details for the autoregressive order selection experiment.}
\label{table: simulation_hyperparameters}
\begin{small}
\begin{center}
\begin{adjustbox}{max width=\textwidth}
\begin{tabular}{c | c | c | c | c }
\toprule
& PA-RNN $1$ & PA-RNN $15$  & RNN $1$ & RNN $15$\\
\midrule
Learning rate & $4e-3$ & $1e-4$ & $1e-4$ & $1e-4$\\
Iterations & $25000$ & $25000$ & $25000$ & $25000$\\
Optimizer & SGHMC & SGHMC & SGHMC & SGHMC\\
Batch size &$36$ & $36$ & $36$ & $36$\\
Subsample size per iteration &$50$ & $50$  & $50$  & $50$ \\
Predicton horizon & $1$ & $1$ & $1$ & $1$\\
$\sigma_{1,n}^2$ & $0.05$          & $0.05$ & - & -\\
$(\sigma_{0,n}^{init})^2$ & $2e-6$ & $4e-6$ & -  & - \\
$(\sigma_{0,n}^{end})^2$  & $1e-7$ & $1e-7$ & -  & - \\
$\lambda_{n}$ & $1e-7$             & $1e-7$ & -  & - \\
Temperature & $0.1$                & $0.1$  & -  & - \\
$T_1$ & $5000$                      & $5000$  & - & - \\
$T_2$ & $10000$                      & $10000$  & - & - \\
$T_3$ & $25000$                & $25000$  & - & - \\
\bottomrule
\end{tabular}
\end{adjustbox}
\end{center}
\end{small}
\end{table}

\subsection{Large-Scale Model Compression}
\label{appendix: model_compression}

As pointed out by recent summary/survey papers on sparse deep learning \cite{blalock2020state, hoefler2021sparsity}, the lack of standardized benchmarks and metrics that provide guidance on model structure, task, dataset, and sparsity levels has caused difficulties in conducting fair and meaningful comparisons with previous works. For example, the task of compressing Penn Tree Bank (PTB) word language model \cite{zaremba2014recurrent} is a popular comparison task. However, many previous works \cite{chirkova2018bayesian, kodryan2019efficient, lobacheva2018bayesian, grachev2019compression} have avoided comparison with the state-of-the-art method by either not using the standard baseline model, not reporting, or conducting comparisons at different sparsity levels. Therefore, we performed an extensive search of papers that reported performance on this task, and to the best of our knowledge, the state-of-the-art method is the Automated Gradual Pruning (\texttt{AGP}) by \cite{zhu2017prune}.

In our experiments, we train large stacked LSTM language models on the PTB dataset at different sparsity levels. The model architecture follows the same design as in \cite{zaremba2014recurrent}, comprising an embedding layer, two stacked LSTM layers, and a softmax layer. The vocabulary size for the model is $10000$, the embedding layer size is $1500$, and the hidden layer size is $1500$ resulting in a total of $66$ million parameters. We compare our method with \texttt{AGP} at different sparsity levels, including $80\%$, $85\%$, $90\%$, $95\%$, and $97.5\%$ as in \cite{zhu2017prune}. The results are summarized in Table \ref{table: wlm_results}, and our method achieves better results consistently. For the \texttt{AGP}, numbers are taken directly from the original paper, and since they only provided one experimental result for each sparsity level, no standard deviation is reported. For our method, we run three independent trials and provide both the mean and standard deviation for each sparsity level. During the initial training stage, we follow the same training procedure as in \cite{zaremba2014recurrent}. The details of the prior annealing and fine-tuning stages of our method for different sparsity levels are provided below.

During the initial training stage, we follow the same training procedure as in \cite{zaremba2014recurrent} for all sparsity, and hence $T_1 = 55$. 

During the prior annealing stage, we train a total of $60$ epochs using SGHMC. For all levels of sparsity considered, we fix $\sigma_{1,n}^2 = 0.5$, $\lambda_n = 1e-6$, momentum $1-\alpha = 0.9$, minibatch size $=20$, and we fix $T_2 = 5 + T_1 = 60, T_3 = T_2 +20=80$. We set the initial temperature $\tau = 0.01$ for $t \leq T_3$ and and gradually decrease $\tau$ by $\tau = \dfrac{0.01}{t-T_3}$ for $t > T_3$.

During the fine tune stage, we apply a similar training procedure as the initial training stage, i.e., we use SGD with gradient clipping and we decrease the learning rate by a constant factor after certain epochs. The minibatch size is set to be $20$, and we apply early stopping based on the validation perplexity with a maximum of $30$ epochs.

Table \ref{table: wlm_hyper} gives all hyperparameters (not specified above) for different sparsity levels.

Note that, the mixture Gaussian prior by nature is a regularization method, so we lower the dropout ratio during the prior annealing and fine tune stage for models with relatively high sparsity.

\begin{table}[h]
\caption{Test set perplexity for different methods and sparsity levels.}
\label{table: wlm_results}
\begin{center}
\begin{tabular}{ c c c} 
\toprule
Method & Sparsity & Test Perplexity\\
\midrule
\texttt{baseline} & $0\%$ & $78.40$\\
\midrule
\texttt{PA} & $80.87\% \pm 0.09\%$ & $77.122 \pm 0.147$\\
\texttt{PA} & $85.83\% \pm 0.05\%$ & $77.431 \pm 0.109$\\
\texttt{PA} & $90.53\% \pm 0.05\%$ & $78.823 \pm 0.118$\\
\texttt{PA} & $95.40\% \pm 0.00\%$ & $84.525 \pm 0.112$\\
\texttt{PA} & $97.78\% \pm 0.05\%$ & $93.268 \pm 0.136$\\
\midrule
\texttt{AGP} & $80\%$ & $77.52$\\
\texttt{AGP} & $85\%$ & $78.31$\\
\texttt{AGP} & $90\%$ & $80.24$\\
\texttt{AGP} & $95\%$ & $87.83$\\
\texttt{AGP} & $97.5\%$ & $103.20$\\
\bottomrule
\end{tabular}
\end{center}
\end{table}

\begin{table}[t]
\caption{Word Language Model Compression: Hyperparameters for our method during the prior annealing and fine tune stage. We denote the learning rate by LR, the prior annealing stage by PA, and the fine tune stage by FT.}
\label{table: wlm_hyper}
\vskip 0.15in
\begin{center}
\begin{small}
\begin{tabular}{ c | c c c c c} 
\toprule
Hyperparameters/Sparsity & $80\%$ &  $85\%$ &  $90\%$ & $95\%$ & $97.5\%$\\
\midrule
Dropout ratio & $0.65$ & $0.65$ & $0.65$ & $0.4$ & $0.4$ \\
\midrule
$(\sigma_{0,n}^{init})^2$ & $0.0005$ & $0.0007$ & $0.00097$ & $0.00174$ & $0.00248$ \\
\midrule
$(\sigma_{0,n}^{end})^2$ & $1e-7$ & $1e-7$ & $1e-7$ & $1e-7$ & $1e-6$ \\
\midrule
LR PA & $0.004$ & $0.004$ & $0.008$ & $0.008$ & $0.01$\\
\midrule
LR FT & $0.008$ & $0.008$ & $0.008$ & $0.1$ & $0.2$ \\
\midrule
LR decay factor FT & $1/0.95$ & $1/0.95$ & $1/0.95$ & $1/0.95$ & $1/0.95$ \\
\midrule
LR decay epoch FT & $5$ & $5$ & $5$ & $5$ & $5$ \\
\bottomrule
\end{tabular}
\end{small}
\end{center}
\vskip -0.1in
\end{table}

\begin{table}[h]
\caption{Training details for the additional autoregressive order selection experiment.}
\label{table: additional_simulation_hyperparameters}
\begin{small}
\begin{center}
\begin{adjustbox}{max width=\textwidth}
\begin{tabular}{c | c }
\toprule
$y_i = \left(0.8-1.1\exp{-50y_{i-1}^2}\right)y_{i-1} + \eta_i$& PA-RNN $1,3,5,7,10,15$ \\
\midrule
Learning rate & $1e-4$ \\
Iterations & $2000$ \\
Optimizer & SGHMC \\
Batch size &$36$ \\
Subsample size per iteration &$50$ \\
Predicton horizon & $1$ \\
$\sigma_{1,n}^2$ & $0.05$ \\
$(\sigma_{0,n}^{init})^2$ & $1e-5$ \\
$(\sigma_{0,n}^{end})^2$  & $1e-6$ \\
$\lambda_{n}$ & $1e-6$ \\
Temperature & $0.1$ \\
$T_1$ & $500$ \\
$T_2$ & $1000$ \\
$T_3$ & $1500$ \\
\bottomrule
\end{tabular}
\end{adjustbox}
\end{center}
\end{small}
\end{table}

\end{document}